\documentclass{article}

\usepackage{microtype}
\usepackage{graphicx}
\usepackage{subfigure}
\usepackage{booktabs}
\usepackage{multirow}

\usepackage{amsmath}
\usepackage{amssymb}
\usepackage{amsthm}
\usepackage{mathtools}
\usepackage{stmaryrd}

\usepackage[pagebackref]{hyperref}


\usepackage[accepted]{icml2025}


\usepackage[textsize=tiny, tickmarkheight=2pt]{todonotes}


\usepackage{xurl}
\usepackage{bm}
\usepackage{physics}
\usepackage[ruled]{algorithm2e}
\usepackage[shortlabels]{enumitem}
\usepackage[capitalize]{cleveref}
\usepackage{accents}
\usepackage{tikz}
\usepackage{xcolor}
\usepackage{xkcdcolors}
\usepackage{tablefootnote}
\usepackage[mathcal]{euscript}
\usepackage[most]{tcolorbox}
\usepackage{float}

\usetikzlibrary{positioning}
\usetikzlibrary{calc}
\usetikzlibrary{arrows.meta}

\theoremstyle{plain}
\newtheorem{theorem}{Theorem}[section]

\newtheorem{lemma}[theorem]{Lemma}
\newtheorem{corollary}[theorem]{Corollary}
\theoremstyle{definition}

\newtheorem{example}[theorem]{Example}
\theoremstyle{remark}

\DeclareMathSymbol{\shortminus}{\mathbin}{AMSa}{"39}

\DeclareMathOperator*{\argmin}{arg\,min}

\allowdisplaybreaks
\colorlet{onno}{xkcdPeriwinkleBlue}
\colorlet{claire}{xkcdPaleGreen}

\crefname{assumption}{Assumption}{Assumptions}
\crefname{equation}{}{}
\setlength{\marginparwidth}{1.5cm}

\let\norm\undefined
\DeclarePairedDelimiter\norm{\lVert}{\rVert}
\def\uup{\tikz[baseline=-0.5ex] \draw[-{>[length=0.25em]}] (0,-0.35em) -- (0,0.35em);}
\def\udn{\tikz[baseline=-0.5ex] \draw[-{>[length=0.25em]}] (0,0.35em) -- (0,-0.35em);}

\newcommand\myshade{60}
\colorlet{myurlcolor}{violet}
\definecolor{goodcolor}{HTML}{0ab246}
\hypersetup{
    colorlinks=true,
    linkcolor=goodcolor!\myshade!black,
    citecolor=goodcolor!\myshade!black,    
    urlcolor=goodcolor!\myshade!black,
}

\makeatletter
\AddToHook{cmd/appendix/before}{\def\cref@section@alias{appendix}}
\makeatother
\AddToHook{env/lemma/begin}{\crefalias{theorem}{lemma}}
\AddToHook{env/example/begin}{\crefalias{theorem}{example}}

\newtcolorbox{note}[1][]{
    colback=red!5!white,
    colframe=red!75!black,
    fonttitle=\small\scshape,
    colbacktitle=red!85!black,
    enhanced,
    attach boxed title to top center={yshift=-3mm},
    title=Notes,
    fontupper=\small\vspace{1mm},
    breakable,
    #1
}

\newtcolorbox{readme}[1][]{
    colback=xkcdOrangeyYellow!5!white,
    colframe=xkcdOrangeyYellow!75!black,
    fonttitle=\small\scshape,
    colbacktitle=xkcdOrangeyYellow!85!black,
    enhanced,
    attach boxed title to top center={yshift=-3mm},
    title=Readme,
    fontupper=\small\vspace{1mm},
    breakable,
    #1
}

\newlength{\W}
\setlength{\W}{3.31334in}
\newlength{\dw}
\setlength{\dw}{0.055\W}

\def\ya{\tikz[baseline=-0.5ex, line width=0.3mm] \draw[-{>[length=1mm]}] (0,-0.3\dw) -- (0,0.3\dw);}
\def\yb{\tikz[baseline=-0.5ex, line width=0.3mm] \draw[-{>[length=1mm]}] (0,0.3\dw) -- (0,-0.3\dw);}
\def\yo{\tikz[baseline=-0.5ex, line width=0.3mm] \draw[-{>[length=1mm]}] (-0.3\dw,0) -- (0.3\dw,0);}

\begin{document}

\twocolumn[
\icmltitle{Partially Observable Reinforcement Learning with Memory Traces}



\icmlsetsymbol{equal}{*}

\begin{icmlauthorlist}
\icmlauthor{Onno Eberhard}{mpi,ut}
\icmlauthor{Michael Muehlebach}{mpi}
\icmlauthor{Claire Vernade}{ut}
\end{icmlauthorlist}

\icmlaffiliation{mpi}{Max Planck Institute for Intelligent Systems, Tübingen, Germany}
\icmlaffiliation{ut}{University of Tübingen}

\icmlcorrespondingauthor{Onno Eberhard}{oeberhard@tue.mpg.de}

\icmlkeywords{Machine Learning, ICML}

\vskip 0.3in
]



\printAffiliationsAndNotice{}  

\begin{abstract}
    Partially observable environments present a considerable computational challenge in reinforcement learning due to the need to consider long histories.
    Learning with a finite window of observations quickly becomes intractable as the window length grows.
    In this work, we introduce \emph{memory traces}.
    Inspired by eligibility traces, these are compact representations of the history of observations in the form of exponential moving averages.
    We prove sample complexity bounds for the problem of offline on-policy evaluation that quantify the return errors achieved with memory traces for the class of Lipschitz continuous value estimates.
    We establish a close connection to the window approach, and demonstrate that, in certain environments, learning with memory traces is significantly more sample efficient.
    Finally, we underline the effectiveness of memory traces empirically in online reinforcement learning experiments for both value prediction and control.
\end{abstract}

\section{Introduction}\label{sec:intro}
Learning and acting in partially observable environments requires memory.
It is generally not enough to act simply according to the most recent observation, and an agent must therefore keep track of the past, as every piece of information is likely to be relevant for future actions.
In the absence of additional assumptions, optimal behavior requires the agent to reason over its entire history of observations.
This space quickly becomes intractable, and just like with continuous state spaces in fully observed systems, it is therefore necessary to resort to approximation.
This article investigates the fundamental limitations and trade-offs arising from these approximations.
We focus on extracting features from the history of observations and analyze the complexity of learning functions of these features.
In contrast to methods based on recurrent neural networks that learn directly from the stream of observations \citep[e.g.,][]{ni2022recurrent}, this feature-based approach provides an analysis framework that is mathematically tractable.

\begin{figure}
    \centering
    \begin{tikzpicture}[line width=0.3mm]

    \node[rectangle, minimum width=2\dw, minimum height=2\dw, outer sep=0pt] (x0) at (-7\dw, 0)  {};
    \node[fill=xkcdOceanBlue, rectangle, minimum width=2\dw, minimum height=2\dw] at (7\dw, 0)  {};
    \node[rectangle, minimum width=2\dw, minimum height=2\dw, outer sep=0pt] (xt) at (7\dw, 2\dw)  {};
    \node[rectangle, minimum width=2\dw, minimum height=2\dw, outer sep=0pt] (xb) at (7\dw, -2\dw)  {};
    \draw[->, shorten >= 0.5pt] (-9\dw, 0) -- (x0);
    \path[fill=xkcdPaleRed] (x0.south west) -- (x0.north east) -- (x0.north west);
    \path[fill=xkcdMoss] (x0.south west) -- (x0.north east) -- (x0.south east);
    \path[draw] (x0.south west) -- (x0.north east);
    
    \path[fill=xkcdPaleRed] (xt.south west) -- (xt.north east) -- (xt.north west);
    \path[fill=xkcdMoss] (xt.south west) -- (xt.north east) -- (xt.south east);
    \path[draw] (xt.south west) -- (xt.north east);

    \path[fill=xkcdPaleRed] (xb.south west) -- (xb.north east) -- (xb.north west);
    \path[fill=xkcdMoss] (xb.south west) -- (xb.north east) -- (xb.south east);
    \path[draw] (xb.south west) -- (xb.north east);

    \foreach \x in {-6, -4, ..., 4}
        \fill[color=xkcdTan] (\x\dw, -\dw) rectangle +(2\dw, 2\dw);
    \foreach \x in {-8, -6, ..., 6}
        \draw[line width=0.4mm] (\x\dw, -\dw) rectangle +(2\dw, 2\dw);

    \foreach \x in {-5, -3, ..., 5} {
        \foreach \ang in {0,5,...,360}
            \node[shift={(\ang:0.5pt)}, xkcdTan!50!white] at (\x\dw, 0)  {\yo};
        \node at (\x\dw, 0) {\yo};
    }

    \foreach \ang in {0,5,...,360}
        \node[shift={(\ang:0.5pt)}, xkcdOceanBlue!50!white] at (7\dw, 0)  {\large \texttt{?}};
    \node[rectangle, minimum width=2\dw, minimum height=2\dw] at (7\dw, 0)  {\large \texttt{?}};

    \foreach \ang in {0,5,...,360}
    \node[anchor=south east, inner sep=0, yshift=3, xshift=1, shift={(\ang:0.5pt)}, xkcdPaleRed!50!white] at (7\dw, 2\dw)  {\footnotesize $+1$};
    \node[anchor=south east, inner sep=0, yshift=3, xshift=1] at (7\dw, 2\dw)  {\footnotesize $+1$};
    \foreach \ang in {0,5,...,360}
    \node[anchor=north west, inner sep=0, yshift=-3, xshift=-2, shift={(\ang:0.5pt)}, xkcdMoss!50!white] at (7\dw, 2\dw)  {\footnotesize $-1$};
    \node[anchor=north west, inner sep=0, yshift=-3, xshift=-2] at (7\dw, 2\dw)  {\footnotesize $-1$};

    \foreach \ang in {0,5,...,360}
    \node[anchor=south east, inner sep=0, yshift=3, xshift=1, shift={(\ang:0.5pt)}, xkcdPaleRed!50!white] at (7\dw, -2\dw)  {\footnotesize $-1$};
    \node[anchor=south east, inner sep=0, yshift=3, xshift=1] at (7\dw, -2\dw)  {\footnotesize $-1$};
    \foreach \ang in {0,5,...,360}
    \node[anchor=north west, inner sep=0, yshift=-3, xshift=-2, shift={(\ang:0.5pt)}, xkcdMoss!50!white] at (7\dw, -2\dw)  {\footnotesize $+1$};
    \node[anchor=north west, inner sep=0, yshift=-3, xshift=-2] at (7\dw, -2\dw)  {\footnotesize $+1$};

    \node[draw, rectangle, minimum width=2\dw, minimum height=2\dw, line width=0.4mm] at (7\dw, 2\dw)  {};
    \node[draw, rectangle, minimum width=2\dw, minimum height=2\dw, line width=0.4mm] at (7\dw, -2\dw)  {};
    
    \foreach \ang in {0,5,...,360}
    \node[anchor=south east, inner sep=1, xshift=-1.5, shift={(\ang:0.5pt)}, xkcdPaleRed!50!white] at (-7\dw, 0) {\ya};
    \node[anchor=south east, inner sep=1, xshift=-1.5] at (-7\dw, 0) {\ya};
    
    \foreach \ang in {0,5,...,360}
    \node[anchor=north west, inner sep=1, xshift=1.5, shift={(\ang:0.5pt)}, xkcdMoss!50!white] at (-7\dw, 0) {\yb};
    \node[anchor=north west, inner sep=1, xshift=1.5] at (-7\dw, 0) {\yb};

    \node[rectangle, rounded corners, minimum width=1\dw, minimum height=1\dw] (z') at (-4.1\dw, 2.5\dw) {};
    \node[left=0.5\dw of z', scale=0.7] {Memory update:};
    \node[anchor=base, above=0.5\dw of z', scale=0.7, anchor=base] {new memory};
    \fill[xkcdMoss!20!white] (z'.center) -- (z'.west) [rounded corners=5pt] -- (z'.south west) [sharp corners]-- (z'.south) -- cycle;
    \fill[xkcdTan!80!white] (z'.center) -- (z'.east) [rounded corners=5pt] -- (z'.north east) [sharp corners]-- (z'.north) -- cycle;
    \fill[xkcdOceanBlue] (z'.center) -- (z'.east) [rounded corners=5pt] -- (z'.south east) [sharp corners]-- (z'.south) -- cycle;
    \node[draw, rectangle, rounded corners, minimum width=1\dw, minimum height=1\dw] at (z') {};
    \path[draw] (z'.north) -- (z'.south);
    \path[draw] (z'.west) -- (z'.east);
    \node[anchor=north west, right=0\dw of z'.south east, scale=0.6, inner sep=0] {\small $t + 1$};
    \node[right=0.6\dw of z', scale=0.9] (lam) {$=\lambda$};
    \node[rectangle, rounded corners, minimum width=1\dw, minimum height=1\dw, right=0\dw of lam] (z) {};
    \fill[xkcdMoss!26!white] (z.center) -- (z.west) [rounded corners=5pt] -- (z.south west) [sharp corners]-- (z.south) -- cycle;
    \fill[xkcdTan] (z.center) -- (z.east) [rounded corners=5pt] -- (z.north east) [sharp corners]-- (z.north) -- cycle;
    \node[draw, rectangle, rounded corners, minimum width=1\dw, minimum height=1\dw] at (z) {};
    \node[above=0.5\dw of z, scale=0.7, anchor=base] {old memory};
    \node[anchor=north west, right=0\dw of z.south east, scale=0.6, inner sep=0] {\small $t$};
    \path[draw] (z.north) -- (z.south);
    \path[draw] (z.west) -- (z.east);
    \node[right=0\dw of z, scale=0.9] (lam_) {$+\;(1 - \lambda)$};
    \node[rectangle, rounded corners, minimum width=1\dw, minimum height=1\dw, right=0\dw of lam_] (y) {};
    \fill[xkcdOceanBlue] (y.center) -- (y.east) [rounded corners=5pt] -- (y.south east) [sharp corners]-- (y.south) -- cycle;
    \node[draw, rectangle, rounded corners, minimum width=1\dw, minimum height=1\dw] at (y) {};
    \node[above=0.5\dw of y, scale=0.7, anchor=base] {new observation};
    \node[anchor=north west, right=0\dw of y.south east, scale=0.6, inner sep=0] {\small $t+1$};
    \path[draw] (y.north) -- (y.south);
    \path[draw] (y.west) -- (y.east);

    \foreach \x in {0, 1, ..., 8} {
        \node[rectangle, rounded corners, minimum width=1\dw, minimum height=1\dw] (z\x) at (1.6*\x\dw - 8.25\dw, -3\dw) {};
    }

    \node[above=0.4\dw of z0.north west, anchor=south west, inner sep=0, scale=0.7] {Memory traces during a trajectory ($\lambda = 0.8$):};

    \fill[xkcdMoss] (z1.center) -- (z1.west) [rounded corners=5pt] -- (z1.south west) [sharp corners]-- (z1.south) -- cycle;
    \fill[xkcdMoss!80!white] (z2.center) -- (z2.west) [rounded corners=5pt] -- (z2.south west) [sharp corners]-- (z2.south) -- cycle;
    \fill[xkcdMoss!64!white] (z3.center) -- (z3.west) [rounded corners=5pt] -- (z3.south west) [sharp corners]-- (z3.south) -- cycle;
    \fill[xkcdMoss!51!white] (z4.center) -- (z4.west) [rounded corners=5pt] -- (z4.south west) [sharp corners]-- (z4.south) -- cycle;
    \fill[xkcdMoss!41!white] (z5.center) -- (z5.west) [rounded corners=5pt] -- (z5.south west) [sharp corners]-- (z5.south) -- cycle;
    \fill[xkcdMoss!32!white] (z6.center) -- (z6.west) [rounded corners=5pt] -- (z6.south west) [sharp corners]-- (z6.south) -- cycle;
    \fill[xkcdMoss!26!white] (z7.center) -- (z7.west) [rounded corners=5pt] -- (z7.south west) [sharp corners]-- (z7.south) -- cycle;
    \fill[xkcdMoss!20!white] (z8.center) -- (z8.west) [rounded corners=5pt] -- (z8.south west) [sharp corners]-- (z8.south) -- cycle;

    \fill[xkcdTan] (z2.center) -- (z2.east) [rounded corners=5pt] -- (z2.north east) [sharp corners]-- (z2.north) -- cycle;
    \fill[xkcdTan] (z3.center) -- (z3.east) [rounded corners=5pt] -- (z3.north east) [sharp corners]-- (z3.north) -- cycle;
    \fill[xkcdTan] (z4.center) -- (z4.east) [rounded corners=5pt] -- (z4.north east) [sharp corners]-- (z4.north) -- cycle;
    \fill[xkcdTan] (z5.center) -- (z5.east) [rounded corners=5pt] -- (z5.north east) [sharp corners]-- (z5.north) -- cycle;
    \fill[xkcdTan] (z6.center) -- (z6.east) [rounded corners=5pt] -- (z6.north east) [sharp corners]-- (z6.north) -- cycle;
    \fill[xkcdTan] (z7.center) -- (z7.east) [rounded corners=5pt] -- (z7.north east) [sharp corners]-- (z7.north) -- cycle;
    \fill[xkcdTan!80!white] (z8.center) -- (z8.east) [rounded corners=5pt] -- (z8.north east) [sharp corners]-- (z8.north) -- cycle;

    \fill[xkcdOceanBlue] (z8.center) -- (z8.east) [rounded corners=5pt] -- (z8.south east) [sharp corners]-- (z8.south) -- cycle;

    \foreach \x in {0, 1, ..., 8} {
        \node[draw, rectangle, rounded corners, minimum width=1\dw, minimum height=1\dw] at (z\x) {};
        \path[draw] (z\x.north) -- (z\x.south);
        \path[draw] (z\x.west) -- (z\x.east);
        \node[anchor=north west, right=0\dw of z\x.south east, scale=0.6, inner sep=0] {\small $\x$};
    }

\end{tikzpicture}
    \caption{Illustration of the memory trace mechanism in the \emph{T-maze} environment.
    The agent starts in the leftmost tile and receives an observation that reveals if the reward at the end of the corridor is at the top or bottom.
    The agent has to remember this information until the last step.
    It can be seen that this information has faded in $z_8$ (the final memory trace), but has not completely disappeared.}
    \label{fig:tmaze}
\end{figure}
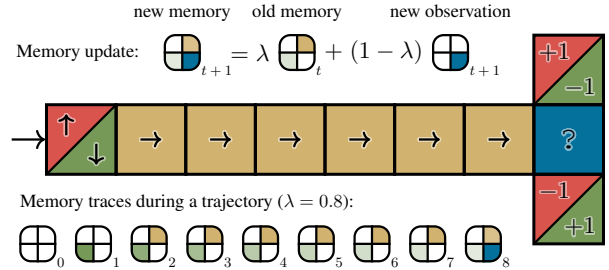
The archetypical feature in partially observable reinforcement learning (RL) is the \emph{length-$m$ window}, which truncates the history and only keeps the $m$ most recent observations.
In deep RL, this approach is known as \emph{frame stacking} \citep{mnih2015human}.
Under certain assumptions, such as observability \citep{golowich2023planning,liu2022when} or multi-step decodability \citep{efroni2022provable}, it can be shown that such a window is sufficient for good behavior.
However, in many settings, the window would have to be very long to be useful, which is problematic, as the complexity of learning general functions of length-$m$ windows scales exponentially in $m$.
To address this issue, this paper introduces a different feature that we call the \emph{memory trace}.
This feature is inspired by eligibility traces, and consists of an exponential moving average of the stream of observations.
Thus, upon receiving a new observation $y_t$, the memory trace $z_t$ is moved closer toward $y_t$:
\begin{equation}\label{eq:recursive}
    z_t = \lambda z_{t - 1} + (1 - \lambda) y_t\text,
\end{equation}
where $\lambda \in [0, 1)$ is a forgetting factor.
This mechanism is illustrated in \cref{fig:tmaze} in the \emph{T-maze} environment \citep{bakker2001reinforcement}, where we took inspiration from \citet{allen2024mitigating}.

Our results are concerned with the problem of \emph{offline on-policy evaluation}, which is an ideal setting for studying the window and memory trace features in terms of sample efficiency.
We find that learning with windows is equivalent, in terms of capacity and sample complexity, to learning Lipschitz continuous functions of memory traces \emph{if and only if $\lambda < \frac{1}{2}$}.
For larger $\lambda$, we demonstrate that there are environments where learning Lipschitz functions of memory traces is significantly more efficient than learning with windows to achieve a desired return error.
In particular, we show that the T-maze (\cref{fig:tmaze}) is such an environment and also present an illustrative two-state environment to provide intuition.\looseness=-1

Turning to \emph{online} reinforcement learning, we show that memory traces are easily incorporated into existing algorithms to provide a scalable alternative to windows.
We empirically demonstrate that, under temporal difference learning with linear function approximation, memory traces significantly outperform the general length-$m$ window approach in a simple random walk experiment.
Finally, we show that memory traces can handle considerably longer memory requirements than the frame stacking approach by evaluating a proximal policy optimization \citep[PPO;][]{schulman2017proximal} agent in the T-maze environment.

\section{Related work}
The topic of memory has a long-standing history in reinforcement learning.
Over the years, there have been many proposals for how to design effective memory, from \emph{utile distinction memory} \cite{mccallum1993overcoming}, which, recognizing that not all length-$m$ histories are important, builds a tree of histories that are useful for value prediction, to \emph{neural Turing machines} \cite{graves2014neural}, in which a deep learning agent is equipped with external memory to write to and read from.
Common types of memory for deep reinforcement learning include recurrent neural networks \citep[e.g.,][]{hausknecht2015deep} and transformer architectures \citep[e.g.,][]{esslinger2022deep}.
While some of these designs have lead to empirical success \citep[e.g.,][]{vinyals2019grandmaster}, the only type of memory that is theoretically well understood is the length-$m$ window, which has been studied extensively \citep{efroni2022provable,golowich2023planning,cayci2024finite}.

The memory trace that we present in this work is inspired by the \emph{eligibility trace} \citep[e.g.,][Chapter 12]{sutton2018reinforcement}, which can also be interpreted as a type of memory.
The relevance of eligibility traces for learning in partially observable environments has been studied before \citep{loch1998eligibility,allen2024mitigating}, but they have, to the best of our knowledge, not been analyzed as a memory for RL.

\begin{figure*}
    \centering
    \includegraphics{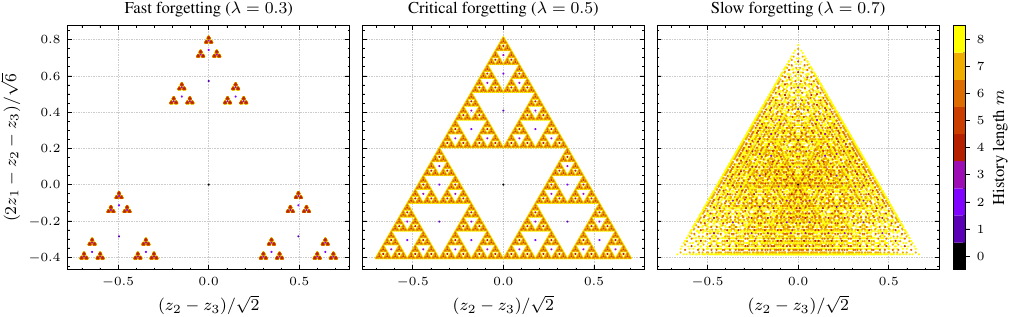}
    \vspace{-0.5cm}
    \caption{A visualization of \emph{trace space}, where $\mathcal Y$ is one-hot with $|\mathcal Y| = 3$.
    The memory traces form Sierpiński triangle patterns, with $\lambda$ controlling the dilation between triangles.
    The center point of each triangle corresponds to the length-$m$ window shared by all traces that make up the triangle.
    A similar visualization of the trace space for one-hot observations with $|\mathcal Y| = 4$ is shown in \cref{fig:sierpinski-4d}.
    }
    \label{fig:sierpinski}
\end{figure*}

\section{Preliminaries}
\paragraph{POMDPs.}
We consider the problems of prediction and control in a finite partially observable Markov decision process (POMDP).
The state space is $\mathcal X = \{x^1, \dots, x^{|\mathcal X|}\}$, the action space is $\mathcal U = \{u^1, \dots, u^{|\mathcal U|}\}$, and the observation space is $\mathcal Y = \{y^1, \dots, y^{|\mathcal Y|}\} \subset \mathcal Z$, where $\mathcal Z$ is a Euclidean space. 
For example, if $\mathcal Y$ is one-hot, then $y^i$ is the $i$\textsuperscript{th} standard basis vector of $\mathbb R^{|\mathcal Y|} \doteq \mathcal Z$.
The POMDP is described by the transition dynamics $p(x_{t+1} \mid x_t, u_t)$, the emission probabilities $p(y_t \mid x_t)$, and the initial state distribution $p(x_0)$.
The reward function $r: \mathcal Y \to [\underaccent{\bar}{r}, \bar r]$ maps observations to rewards.\footnote{Sometimes $r$ is defined to map from $\mathcal X \times \mathcal U$. Our definition is equivalent (since it is always possible to extend $\mathcal Y$ to include rewards) and emphasizes that rewards are observed quantities.}
If a policy is fixed, then the POMDP reduces to a hidden Markov model (HMM).


\paragraph{Memory traces and windows.}
A HMM's \emph{history} at time $t$ is the sequence of observations up to that time: $h_t \doteq (y_t, y_{t-1}, \dots)$.
It is often convenient to let the time $t = 0$ stand for the \emph{current} time step and write $h \doteq h_0$ and $y \doteq y_0$.
A history $h \in \mathcal Y^{|h|}$ may be of finite or infinite length $|h| \in \{0, 1, \dots, \infty\}$.
The \emph{length-$m$ window} is obtained from a history by truncation:
\begin{equation*}
    \operatorname{win}_m(h) \doteq (y_0, y_{-1}, \dots, y_{-m + 1}).
\end{equation*}
The \emph{memory trace} corresponding to a history $h$ is defined as
\begin{equation}\label{eq:trace}
    z_\lambda(h) \doteq (1 - \lambda)\sum_{k = 0}^{|h|-1} \lambda^k y_{-k}.
\end{equation}
Given a history $h$, we often write $z_t \doteq z_\lambda(h_t)$, and define $z \doteq z_0$. 
We can then rewrite \cref{eq:trace} recursively as to obtain \cref{eq:recursive}
for $t \in \{0, \dots, -|h| + 1\}$, with $z_{-|h|} \doteq 0$.
The set of all length-$m$ memory traces is defined as
\begin{equation*}
    \mathcal Z_\lambda^m \doteq \{z_\lambda(h) \mid h \in \mathcal Y^m\}\text,
\end{equation*}
where $m \in \{0, 1, \dots, \infty\}$, and we define $\mathcal Z_\lambda \doteq \mathcal Z_\lambda^\infty$.
Both windows and memory traces can be extended to include past actions in addition to observations.

\paragraph{Covering numbers.}
Many of our results are based on the concept of \emph{covering numbers}.
Let $(X, \rho)$ be a metric space and $T \subset X$.
A finite set $S \subset X$ is said to \emph{$\epsilon$-cover} $T$, for some $\epsilon > 0$, if for every $x \in T$, there exists a $y \in S$ such that $\rho(x, y) \leq \epsilon$.
The \emph{$\epsilon$-covering number} $N_\epsilon(T)$ is the cardinality of the smallest $\epsilon$-cover of $T$.
The \emph{metric entropy} of $T$ is defined as $H_\epsilon(T) \doteq \log N_\epsilon(T)$, and the \emph{Minkowski dimension} of $T$ is defined as $\dim(T) \doteq \lim_{\epsilon \to 0}\frac{H_\epsilon(T)}{\log 1/\epsilon}$, assuming the limit exists.
We deal with both Euclidean spaces, in which case $\rho$ is the Euclidean norm, and functional spaces, where $\rho$ is the sup-norm $\norm{f}_\infty \doteq \sup_x |f(x)|$.\looseness=-1

\paragraph{Offline on-policy evaluation.}
We study the problem of \emph{offline on-policy evaluation}.
This setting assumes the availability of a dataset $\mathcal D = \{\tau_i\}_{i=1}^n$ consisting of $n$ trajectories
\begin{equation*}
    \tau_i = (\dots, y_{-2}, y_{-1}, y_{0}, y_{1}, y_{2}, \dots)_i \sim \mathcal E
\end{equation*}
that are drawn independently from an environment $\mathcal E$ (a hidden Markov model with a reward function $r$).
We assume that all trajectories are of infinite length in both positive and negative time, and consider the problem of approximating the value function $v: \mathcal Y^\infty \to [\underaccent{\bar}{v}, \bar v]$,
\begin{equation*}
    v(h) \doteq \mathbb{E}_{\mathcal E}\biggl[\sum_{t=0}^\infty \gamma^t r(y_{t+1}) \bigm| h_0 = h\biggr]\text,
\end{equation*}
where $\underaccent{\bar}{v} \doteq \underaccent{\bar}{r} / (1 - \gamma)$, $\bar v \doteq \bar r / (1 - \gamma)$, and $\gamma \in [0, 1)$ is a discount factor.
Given a function class $\mathcal F \subset \{f: \mathcal Y^\infty \to [\underaccent{\bar}{v}, \bar v]\}$, our goal is to find the function $f \in \mathcal F$ that minimizes the \emph{return error}
\begin{equation*}
    \mathcal R_{\mathcal E}(f) \doteq \frac{1}{2}\mathbb E_{\mathcal E}\biggl[\bigl\{f(y_0, y_{-1}, \dots) - \sum_{t=0}^\infty \gamma^t r(y_{t+1})\bigr\}^2\biggr].
\end{equation*}
As this expectation cannot be computed directly, we instead consider functions that minimize the \emph{empirical return error}
\begin{equation*}
    \mathcal R_n(f) \doteq \frac{1}{2n} \sum_{\tau\in\mathcal D}\bigl\{f(y_0, y_{-1}, \dots) - \sum_{t=0}^\infty \gamma^t r(y_{t+1})\bigr\}^2\text,
\end{equation*}
which we analyze via generalization bounds.

\section{The geometry of trace space}
Perhaps surprisingly, it turns out that, under mild conditions, the memory trace preserves all information of the history.
\begin{theorem}[Finite injectivity]\label{thm:injectivity}
    If $\lambda \in (0, 1) \cap \mathbb Q$ and $\mathcal Y$ is linearly independent, then the memory trace is injective: if $h$ and $\bar h$ are distinct finite histories, then $z_\lambda(h) \neq z_\lambda(\bar h)$.
\end{theorem}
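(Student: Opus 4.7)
The plan is to reduce the vector identity $z_\lambda(h)=z_\lambda(\bar h)$ to a scalar identity using linear independence of $\mathcal{Y}$, and then exploit rationality of $\lambda$ through a divisibility argument modulo the denominator of $\lambda$.

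First, I would assume for contradiction that $h\neq\bar h$ are distinct finite histories with $z_\lambda(h)=z_\lambda(\bar h)$. For each observation $y^i\in\mathcal Y$, let $I_i\doteq\{k\in\{0,\dots,|h|-1\}:y_{-k}=y^i\}$ index the time-shifts at which $h$ exhibits $y^i$, and define $\bar I_i$ analogously for $\bar h$. Substituting into \cref{eq:trace} and cancelling the $(1-\lambda)$ factor, I can rewrite
\begin{equation*}
\sum_{i=1}^{|\mathcal Y|}\biggl(\sum_{k\in I_i}\lambda^k\biggr) y^i \;=\;\sum_{i=1}^{|\mathcal Y|}\biggl(\sum_{k\in \bar I_i}\lambda^k\biggr) y^i .
\end{equation*}
Linear independence of $y^1,\dots,y^{|\mathcal Y|}$ then forces the scalar identity $\sum_{k\in I_i}\lambda^k=\sum_{k\in \bar I_i}\lambda^k$ for every $i$.

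Second, since $h\neq\bar h$ there is some index $i^\star$ with $I_{i^\star}\neq \bar I_{i^\star}$. Setting $J\doteq I_{i^\star}\setminus \bar I_{i^\star}$ and $\bar J\doteq \bar I_{i^\star}\setminus I_{i^\star}$ produces disjoint finite subsets of $\mathbb N$, not both empty, satisfying $\sum_{k\in J}\lambda^k=\sum_{k\in\bar J}\lambda^k$. If one of them were empty, the other would have to be as well (a sum of strictly positive terms equals zero only if the sum is empty), so both are nonempty.

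Third, I exploit rationality: write $\lambda=p/q$ in lowest terms with $1\le p<q$ and $\gcd(p,q)=1$, so in particular $q\ge 2$. Let $N\doteq\max(J\cup\bar J)$. Multiplying through by $q^N$ clears denominators and gives the integer identity
\begin{equation*}
\sum_{k\in J} p^k q^{N-k} \;=\; \sum_{k\in \bar J} p^k q^{N-k}.
\end{equation*}
Since $J$ and $\bar J$ are disjoint, $N$ lies in exactly one of them; without loss of generality $N\in J$. On the left-hand side, the term $k=N$ equals $p^N$, while every term with $k<N$ is divisible by $q$; on the right-hand side, every term has $k<N$ and is divisible by $q$. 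Reducing modulo $q$ yields $p^N\equiv 0\pmod q$, contradicting $\gcd(p,q)=1$ together with $q\ge 2$.

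\textbf{Anticipated obstacle.} The conceptually nontrivial step is the third one: recognizing that rationality of $\lambda$ is exactly what powers the modular contradiction. The first two steps are routine bookkeeping, but one must be careful to verify that the symmetric-difference sets $J,\bar J$ are genuinely disjoint and that $N$ is unambiguously in one of them, so the $\pmod q$ reduction leaves a single surviving term. The hypothesis $\lambda\in\mathbb Q$ cannot be dropped casually — for example, $\lambda=1/2$ works precisely because of unique binary expansions, which is this divisibility argument in disguise — and the argument makes transparent why irrationality would require a different (algebraic or Diophantine) approach.
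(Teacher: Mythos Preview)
Your proof is correct and follows essentially the same route as the paper's: reduce to a scalar polynomial identity via linear independence, then rule out $\lambda=p/q$ as a root by looking at divisibility by $q$ of the leading coefficient. The only cosmetic difference is that the paper invokes the rational root theorem as a black box (the leading coefficient of the relevant polynomial is $\pm1$, so $q\mid\pm1$ forces $\lambda\in\mathbb Z$), whereas you unpack that theorem's proof by clearing denominators and reducing modulo $q$; the two arguments are the same at heart.
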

\begin{proof}\vspace{-1em}
    Although the histories $h$ and $\bar h$ are finite, we can \emph{$0$-pad} them by defining $y_{-k} \doteq 0$ for $k \geq |h|$ and $\bar y_{-k} \doteq 0$ for $k \geq |\bar h|$ (the memory trace is unaffected by this).
    The traces $z_\lambda(h)$ and $z_\lambda(\bar h)$ are only equal if their difference is zero:\looseness=-1
    \begin{equation*}
        \frac{z_\lambda(h) - z_\lambda(\bar h)}{1 - \lambda} = \sum_{k=0}^\infty \lambda^k (y_{-k} - \bar y_{-k}) = \sum_{i=1}^{|\mathcal Y|} \alpha_i(\lambda) y^{i}\text,
    \end{equation*}
    where we have defined, using the Iverson bracket $[\cdot]$,
    \begin{equation}\label{eq:alpha}
        \alpha_i(\lambda) \doteq \sum_{k=0}^\infty \lambda^k \bigl([y_{-k} = y^i] - [\bar y_{-k} = y^i]\bigr).
    \end{equation}
    As $\mathcal Y$ is linearly independent, the difference $z_\lambda(h) - z_\lambda(\bar h)$ is only zero if $\alpha_i(\lambda) = 0$ for all $i$.
    By assumption, the histories are distinct, and therefore there must be at least one $\alpha_i$ with at least one nonzero coefficient.
    By \cref{eq:alpha}, the highest-order coefficient $c$ of this $\alpha_i$ (which exists because the histories are finite) is thus either $+1$ or $-1$.
    We will now show that if $\lambda \in \mathbb Q \smallsetminus \mathbb Z$, then $\alpha_i(\lambda) \neq 0$, completing the proof.
    Assume, for the sake of contradiction, that there exists a $\lambda = \frac{p}{q}$, with $p$ and $q$ coprime integers, such that $\alpha_i(\lambda) = 0$.
    Then, by the rational root theorem \citep[e.g.,][Prop.\ 7.29]{aluffi-2021-algebra}, $q$ must divide $c$.
    However, as $c \in \{-1, +1\}$, it follows that $q \in \{-1, +1\}$, and so $\lambda$ is an integer.
    Thus, we can conclude that $\alpha_i(\lambda) \neq 0$ and hence $z_\lambda(h) \neq z_\lambda(h')$.\vspace{-0.5em}
\end{proof}

In \cref{app:injectivity}, we show with two counterexamples that injectivity is not guaranteed if $\lambda$ is irrational, or if the histories are of infinite length.
We also show that the result can be extended to linearly dependent observation spaces (\cref{thm:injectivity-dependent}), in which case the set of $\lambda$ that guarantee injectivity is potentially more complex, but is still dense in $(0, 1)$.

Injectivity of $z_\lambda$ implies that the complete history $h$ can be perfectly reconstructed from the memory trace vector $z_\lambda(h)$.
Thus, the memory trace is an equivalent representation of the history.
However, the exponential decay of \cref{eq:trace} imparts additional structure on the space of memory traces, which makes them attractive representations for learning.
We now briefly investigate this structure, and in the following sections we analyze the efficiency of learning with memory traces compared to window-based learning.

If $\mathcal Y$ is one-hot, then all memory traces $z \in \mathcal Z_\lambda^m$ lie in a $(|\mathcal Y| - 1)$-dimensional affine subset of $[0, 1]^{|\mathcal Y|}$, as can be verified by summing over the elements of $z \in \mathcal Z_\lambda^m$:
\begin{equation}\label{eq:subspace}
    \sum_{i = 1}^{|\mathcal Y|} z_i = (1 - \lambda) \sum_{k = 0}^{m - 1}\lambda^{k}\sum_{i = 1}^{|\mathcal Y|} (y_{-k})_i = 1 - \lambda^m.
\end{equation}
In \cref{fig:sierpinski}, the sets $\mathcal Z_\lambda^m$ for $|\mathcal Y| = 3$ are visualized by orthogonal projection onto this two-dimensional subset.
It can be seen that the distance between traces greatly depends on $\lambda$.
This can be quantified as follows.
\begin{lemma}[Concentration]\label{lem:concentration}
    Let $h$ and $\bar h$ be two histories of one-hot observations such that $\operatorname{win}_m(h) = \operatorname{win}_m(\bar h)$ for some $m$. Then, the corresponding traces satisfy
    \begin{equation*}
        \norm{z_\lambda(h) - z_\lambda(\bar h)} \leq \sqrt 2 \lambda^{m}.
    \end{equation*}
\end{lemma}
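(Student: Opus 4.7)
The plan is to split each memory trace into the contribution coming from the most recent $m$ observations and the contribution coming from everything older. Since $\operatorname{win}_m(h) = \operatorname{win}_m(\bar h)$, the observations $y_{-k}$ and $\bar y_{-k}$ agree for $k = 0,\ldots,m-1$, so the "recent" parts are identical and cancel in the difference. Concretely, I would first write (using the $0$-padding convention from the proof of \cref{thm:injectivity} for finite histories)
\[
z_\lambda(h) - z_\lambda(\bar h) \;=\; (1-\lambda)\sum_{k=m}^{\infty} \lambda^k\bigl(y_{-k} - \bar y_{-k}\bigr) \;=\; \lambda^m\bigl(z' - \bar z'\bigr),
\]
where $z' \doteq (1-\lambda)\sum_{k=0}^{\infty} \lambda^k y_{-(k+m)}$ and $\bar z'$ is defined analogously. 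Crucially, $z'$ and $\bar z'$ are themselves memory traces of the shifted histories $(y_{-m},y_{-m-1},\ldots)$ and $(\bar y_{-m},\bar y_{-m-1},\ldots)$.

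The next step is to show that every memory trace of one-hot observations lies in the bounded region $S \doteq \{x \in [0,1]^{|\mathcal Y|} : \sum_i x_i \leq 1\}$. Nonnegativity of each coordinate is immediate from the definition. The upper bound on each coordinate, and the bound on their sum, follow from the same geometric-series calculation as in \cref{eq:subspace}: the sum of components of any length-$\ell$ trace equals $1 - \lambda^\ell \leq 1$, and each coordinate is itself bounded above by this sum. Hence $z',\bar z' \in S$.

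The last step, and the only real content, is to bound the Euclidean diameter of $S$ by $\sqrt 2$. Since $S$ is convex, its diameter is attained between two extreme points, and the extreme points of $S$ are the origin and the standard basis vectors of $\mathbb R^{|\mathcal Y|}$. The pairwise Euclidean distance between any two such points is at most $\sqrt 2$, with equality realized between two distinct basis vectors. Combining this with the display above yields $\norm{z_\lambda(h) - z_\lambda(\bar h)} = \lambda^m \norm{z' - \bar z'} \leq \sqrt 2\,\lambda^m$. I expect no substantive obstacle: the main subtlety is just keeping track of the $0$-padding convention so that the argument works uniformly for finite and infinite histories.
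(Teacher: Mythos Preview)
Your proposal is correct and follows essentially the same decomposition as the paper: cancel the shared first $m$ terms using $\operatorname{win}_m(h)=\operatorname{win}_m(\bar h)$, pull out the factor $\lambda^m$, and bound the remaining difference by $\sqrt 2$. The only cosmetic difference is that the paper bounds the tail via the triangle inequality and the termwise estimate $\norm{y_{-k-m}-\bar y_{-k-m}}\le\sqrt 2$, summing the geometric series, whereas you bound $\norm{z'-\bar z'}$ in one shot via the diameter of the simplex $S$; both routes give the same constant.
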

\begin{proof}\vspace{-0.5em}
    See \cref{proof:lem:concentration}.\vspace{-0.5em}
\end{proof}
Intuitively, this results states that it is hard to distinguish between traces that only differ in observations from the far past.
While \cref{thm:injectivity} guarantees the existence of functions that map traces to arbitrarily chosen values, the concentration result shows that these functions may have very large Lipschitz constants.
In particular, when two traces $z$ and $\bar z$ correspond to histories that share the last $m$ observations, the function $f$ needs to have a Lipschitz constant of at least $|f(z) - f(\bar z)| / (\sqrt{2} \lambda^{m})$ to map these traces to two different values $f(z)$ and $f(\bar z)$.

\Cref{lem:concentration} only gives an upper bound on the distance between traces.
For a guarantee that a given Lipschitz constant is sufficient, we need a lower bound on this distance.
This is the subject of the following result.
\begin{lemma}[Separation]\label{lem:separation}
    Let $h$ and $\bar h$ be two histories of one-hot observations such that $\operatorname{win}_m(h) \neq \operatorname{win}_m(\bar h)$ for some $m$. Then, if $\lambda \leq \frac{1}{2}$, the corresponding traces satisfy
    \begin{equation*}
        \norm{z_\lambda(h) - z_\lambda(\bar h)} \geq \sqrt 2 (1 - 2\lambda)\lambda^{m-1}.
    \end{equation*}
\end{lemma}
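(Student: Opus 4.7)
The plan is a standard ``leading term dominates the tail'' argument. Writing out the difference
$$
z_\lambda(h) - z_\lambda(\bar h) = (1-\lambda)\sum_{k \ge 0} \lambda^k \bigl(y_{-k} - \bar y_{-k}\bigr),
$$
I will isolate the most recent disagreement. Since $\operatorname{win}_m(h) \neq \operatorname{win}_m(\bar h)$, there exists some $k \in \{0, 1, \dots, m-1\}$ with $y_{-k} \neq \bar y_{-k}$; let $k^\star$ be the smallest such index. All summands with $k < k^\star$ vanish, so the difference is
$$
(1-\lambda)\lambda^{k^\star}\bigl(y_{-k^\star} - \bar y_{-k^\star}\bigr) + (1-\lambda)\sum_{k > k^\star} \lambda^k \bigl(y_{-k} - \bar y_{-k}\bigr).
$$

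Next I use the one-hot structure: for any pair of one-hot vectors $y, \bar y$ in $\mathcal Y$, we have $\lVert y - \bar y \rVert \in \{0, \sqrt 2\}$. Thus the leading term has norm exactly $\sqrt 2\,(1-\lambda)\lambda^{k^\star}$, while the tail is bounded via the triangle inequality by
$$
(1-\lambda)\sum_{k > k^\star} \lambda^k \sqrt 2 = \sqrt 2\,\lambda^{k^\star+1}.
$$
Applying the reverse triangle inequality yields
$$
\lVert z_\lambda(h) - z_\lambda(\bar h)\rVert \ge \sqrt 2\bigl[(1-\lambda)\lambda^{k^\star} - \lambda^{k^\star+1}\bigr] = \sqrt 2\,(1-2\lambda)\lambda^{k^\star}.
$$

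To finish, I invoke the two standing hypotheses: since $\lambda \le \tfrac{1}{2}$, the factor $(1-2\lambda)$ is nonnegative, so replacing $k^\star$ by the (potentially larger) index $m-1$ only weakens the bound; and since $k^\star \le m-1$ with $\lambda \in [0,1]$, we have $\lambda^{k^\star} \ge \lambda^{m-1}$. Combining gives the claimed lower bound $\sqrt 2\,(1-2\lambda)\lambda^{m-1}$.

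The only subtle point is that the argument genuinely breaks when $\lambda > \tfrac 1 2$: then the tail of the geometric series can overwhelm the leading disagreement (a single recent match can be mimicked by many older mismatches), which is exactly why the hypothesis $\lambda \le \tfrac 1 2$ is essential and why the bound would otherwise become vacuous or negative. No other step requires more than elementary manipulation of geometric series and the triangle inequality.
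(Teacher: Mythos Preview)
Your proof is correct and in fact simpler than the paper's. The paper splits the difference at index $m$ rather than at the first disagreement: it writes
\[
\lVert z_\lambda(h)-z_\lambda(\bar h)\rVert \ge \bigl|\,\lVert z_\lambda^m(h)-z_\lambda^m(\bar h)\rVert - \sqrt 2\,\lambda^m\,\bigr|
\]
(where $z_\lambda^m \doteq z_\lambda\circ\operatorname{win}_m$) and then proves by induction on $m$ that $\lVert z_\lambda^m(h)-z_\lambda^m(\bar h)\rVert \ge \sqrt 2(1-\lambda)\lambda^{m-1}$, with a case split on whether $y_0=\bar y_0$ in the inductive step. Your argument bypasses this induction entirely: by locating the \emph{earliest} disagreement $k^\star\le m-1$, you get a leading term of known norm $\sqrt 2(1-\lambda)\lambda^{k^\star}$ directly, and the remaining geometric tail is handled in one line. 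Both routes use the same ingredients (one-hot geometry giving $\lVert y-\bar y\rVert\in\{0,\sqrt 2\}$, the reverse triangle inequality, and $\lambda\le\tfrac12$ to keep the sign), but your decomposition is sharper and shorter. The only cosmetic point worth noting is that finite histories should be $0$-padded so the infinite sum makes sense; this does not affect your tail bound since $\lVert y_{-k}-\bar y_{-k}\rVert\le\sqrt 2$ continues to hold when one or both vectors are zero.
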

\begin{proof}\vspace{-0.5em}
    See \cref{proof:lem:separation}.\vspace{-0.5em}
\end{proof}

This result shows that, if $\lambda < \frac{1}{2}$, then $z_\lambda$ is injective even for infinite histories and irrational $\lambda$.
\Cref{fig:sierpinski} illustrates why the condition $\lambda < \frac{1}{2}$ is necessary: for larger $\lambda$, the traces move past each other and may overlap.

The set $\mathcal Z_\lambda$ has a fractal nature that can be described through its \emph{Minkowski dimension}, which helps characterize the sample complexity of learning with memory traces.

\begin{lemma}\label{lem:minkowski}
    If $\mathcal Y$ is one-hot, then the Minkowski dimension of $\mathcal Z_\lambda$ is, for all $\lambda < \frac{1}{2}$,
    \begin{equation*}
        \dim(\mathcal Z_\lambda) = \frac{\log |\mathcal Y|}{\log(1 / \lambda)} \doteq d_\lambda.
    \end{equation*}
    For all $\lambda \in [0, 1)$, we have $\dim(\mathcal Z_\lambda) \leq \min\{|\mathcal Y| - 1, d_\lambda\}$.
\end{lemma}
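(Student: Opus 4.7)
The plan is to bound the $\epsilon$-covering number $N_\epsilon(\mathcal Z_\lambda)$ from above and below, and then take logarithms and pass to the limit defining the Minkowski dimension. The upper bound $d_\lambda$ will come from \cref{lem:concentration}, the upper bound $|\mathcal Y|-1$ will come from the affine-subspace observation in \cref{eq:subspace}, and the matching lower bound (for $\lambda < \tfrac12$) will come from \cref{lem:separation} via a packing argument.

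For the upper bound $\dim(\mathcal Z_\lambda)\le d_\lambda$, I would fix $m\in\mathbb N$ and note that $|\mathcal Z_\lambda^m|\le|\mathcal Y|^m$. By \cref{lem:concentration}, every element of $\mathcal Z_\lambda$ lies within distance $\sqrt 2 \lambda^m$ of some element of $\mathcal Z_\lambda^m$, because any infinite history shares its last $m$ observations with some length-$m$ history (after 0-padding, which the trace absorbs). Hence $N_{\sqrt 2\lambda^m}(\mathcal Z_\lambda)\le|\mathcal Y|^m$. Given $\epsilon>0$, choose $m=\lceil \log(\sqrt 2/\epsilon)/\log(1/\lambda)\rceil$ so that $\sqrt 2\lambda^m\le\epsilon$; this yields $\log N_\epsilon(\mathcal Z_\lambda)\le m\log|\mathcal Y|$, which upon division by $\log(1/\epsilon)$ and letting $\epsilon\to 0$ gives $\dim(\mathcal Z_\lambda)\le \log|\mathcal Y|/\log(1/\lambda)=d_\lambda$. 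For the second upper bound, \cref{eq:subspace} shows $\mathcal Z_\lambda$ is contained in a bounded subset of a $(|\mathcal Y|-1)$-dimensional affine subspace of $\mathcal Z$, and any bounded subset of a $d$-dimensional affine space satisfies $N_\epsilon=O(\epsilon^{-d})$, so $\dim(\mathcal Z_\lambda)\le |\mathcal Y|-1$.

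For the lower bound when $\lambda<\tfrac12$, I would use a packing argument. Enumerate the $|\mathcal Y|^m$ distinct length-$m$ windows and, for each, extend it to an infinite history (by any fixed padding). This produces $|\mathcal Y|^m$ points in $\mathcal Z_\lambda$ whose pairwise distances are, by \cref{lem:separation}, at least $\sqrt 2(1-2\lambda)\lambda^{m-1}$. Any $\epsilon$-cover with $\epsilon<\tfrac12\sqrt 2(1-2\lambda)\lambda^{m-1}$ must then assign distinct cover points to each of these $|\mathcal Y|^m$ points, giving $N_\epsilon(\mathcal Z_\lambda)\ge|\mathcal Y|^m$. Choosing $m$ to be the largest integer for which this inequality on $\epsilon$ holds, one has $m=\Theta(\log(1/\epsilon)/\log(1/\lambda))$ as $\epsilon\to 0$, and therefore $\log N_\epsilon(\mathcal Z_\lambda)/\log(1/\epsilon)\to \log|\mathcal Y|/\log(1/\lambda)=d_\lambda$, matching the upper bound.

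The step requiring the most care is the packing argument: \cref{lem:separation} is stated for pairs of histories with distinct length-$m$ windows, so I must verify that the $|\mathcal Y|^m$ enumerated infinite histories indeed have pairwise distinct length-$m$ windows regardless of the padding chosen, and that the constants propagate cleanly through the choice $\epsilon\asymp\lambda^m$ so that the $\log(1-2\lambda)$ and $\log\sqrt 2$ terms vanish in the $\epsilon\to 0$ limit. The upper bound is essentially a straightforward consequence of \cref{lem:concentration} combined with the affine-subspace bound, and nothing subtle happens there.
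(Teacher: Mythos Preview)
Your proof is correct and complete, but it takes a different route from the paper. The paper observes that $\mathcal Z_\lambda$ is a \emph{self-similar fractal}, satisfying $\mathcal Z_\lambda = \lambda\mathcal Z_\lambda + (1-\lambda)\mathcal Y$ (Minkowski sum), so it consists of $|\mathcal Y|$ copies of itself each scaled by $\lambda$; for $\lambda<\tfrac12$ the separation lemma guarantees these copies are disjoint, and the paper then simply cites known results on the Minkowski dimension of self-similar sets \citep[Section 1.15.1]{tao2010epsilon} to read off $d_\lambda$. You instead bound $N_\epsilon(\mathcal Z_\lambda)$ directly from above and below using \cref{lem:concentration} and \cref{lem:separation} as covering/packing tools, and then pass to the limit. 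The paper's argument is terser but relies on an external reference and a structural observation; your argument is more elementary and fully self-contained within the lemmas already established in the paper, at the price of tracking a few constants that wash out in the $\epsilon\to 0$ limit. Both approaches ultimately lean on \cref{lem:separation} for the lower bound and on \cref{eq:subspace} for the $|\mathcal Y|-1$ cap, so the underlying ingredients are the same.
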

\begin{proof}\vspace{-1em}
    See \cref{proof:lem:minkowski}.\vspace{-0.5em}
\end{proof}

\section{Complexity of offline on-policy evaluation}
We now analyze the sample complexity of offline on-policy evaluation.
Throughout this section, we assume that $\mathcal Y$ is one-hot.
We are interested in the following two families of function classes: length-$m$ window-based functions
\begin{equation*}
    \mathcal F_m \doteq \{f \circ \operatorname{win}_m \mid f: \mathcal Y^{m} \to [\underaccent{\bar}{v}, \bar v]\}\text,
\end{equation*}
and $L$-Lipschitz continuous functions in trace space
\begin{equation*}
    \mathcal F_{\lambda,L} \doteq \{f \circ z_\lambda \mid f: \mathcal Z_\lambda \to [\underaccent{\bar}{v}, \bar v], \text{$f$ is $L$-Lipschitz} \}.
\end{equation*}
How well a function class $\mathcal F$ is suited for learning the value function in an environment $\mathcal E$ depends on the minimum achievable return error $\mathcal R_{\mathcal E}(\mathcal F) \doteq \inf_{f \in \mathcal F}\mathcal R_{\mathcal E}(f)$ and on the size of the function class, measured by the metric entropy $H_\epsilon(\mathcal F)$.
This is expressed by the following result.
\begin{theorem}[Hoeffding bound]\label{thm:hoeffding}
    Given a dataset of $n$ trajectories from an environment $\mathcal E$, a function class $\mathcal F$, and some $\epsilon > 0$, let $\mathcal F^\epsilon$ be the smallest $\epsilon$-cover of $\mathcal F$ and $f_n \doteq \argmin_{f \in \mathcal F^\epsilon} \mathcal R_n(f)$. Then, with probability at least $1 - \delta$,\looseness=-1
    \begin{equation*}
        \mathcal R_{\mathcal E}(f_n) \leq \mathcal R_{\mathcal E}(\mathcal F) + \Delta^2\sqrt{\frac{H_\epsilon(\mathcal F) + \log\frac{2}{\delta}}{2n}} + \epsilon\Delta + \frac{\epsilon^2}{2}\text,
    \end{equation*}
    where we have defined $\Delta \doteq \bar v - \underaccent{\bar}{v}$.
\end{theorem}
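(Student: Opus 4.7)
The plan is to execute the classical three-step generalization argument: first, a uniform deviation bound over the finite cover $\mathcal{F}^\epsilon$ via Hoeffding's inequality combined with a union bound; second, exploiting the empirical optimality of $f_n$ to compare it against the cover element closest to a near-optimal reference $f^\star \in \mathcal{F}$; and third, an approximation step that relates $\mathcal{R}_{\mathcal E}$ of this cover element back to $\mathcal{R}_{\mathcal E}(f^\star)$.

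For the uniform deviation, each trajectory $\tau$ contributes a single per-example loss $\ell(f, \tau) = \tfrac{1}{2}(f(y_0, y_{-1}, \dots) - \sum_{t \geq 0}\gamma^t r(y_{t+1}))^2$, which lies in $[0, \Delta^2/2]$ because both the prediction and the discounted return belong to $[\underaccent{\bar}{v}, \bar v]$. Since the trajectories are i.i.d., Hoeffding's inequality gives, for each fixed $f$, a two-sided deviation of at most $\tfrac{\Delta^2}{2}\sqrt{\log(2/\delta')/(2n)}$ with probability at least $1 - \delta'$. A union bound over the cover, using $|\mathcal{F}^\epsilon| = e^{H_\epsilon(\mathcal F)}$ and $\delta' = \delta/|\mathcal{F}^\epsilon|$, then yields
\[
\sup_{f \in \mathcal{F}^\epsilon} |\mathcal{R}_n(f) - \mathcal{R}_\mathcal E(f)| \leq \frac{\Delta^2}{2}\sqrt{\frac{H_\epsilon(\mathcal{F}) + \log(2/\delta)}{2n}} \doteq D
\]
with probability at least $1 - \delta$.

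For the approximation step, I fix an arbitrarily near-optimal $f^\star \in \mathcal{F}$ (so that $\mathcal{R}_\mathcal E(f^\star) \leq \mathcal{R}_\mathcal E(\mathcal F) + \eta$ for any $\eta > 0$) and pick $\tilde f \in \mathcal{F}^\epsilon$ with $\norm{\tilde f - f^\star}_\infty \leq \epsilon$. Expanding $(\tilde f - G)^2 = ((f^\star - G) + (\tilde f - f^\star))^2$ pointwise and using $|\tilde f - f^\star| \leq \epsilon$ together with $|f^\star - G| \leq \Delta$ yields
\[
\mathcal{R}_\mathcal E(\tilde f) \leq \mathcal{R}_\mathcal E(f^\star) + \epsilon \Delta + \frac{\epsilon^2}{2}.
\]
I then chain the pieces on the high-probability event: $\mathcal{R}_\mathcal E(f_n) \leq \mathcal{R}_n(f_n) + D$ from the uniform bound, $\mathcal{R}_n(f_n) \leq \mathcal{R}_n(\tilde f)$ by the empirical optimality of $f_n$ over the cover, $\mathcal{R}_n(\tilde f) \leq \mathcal{R}_\mathcal E(\tilde f) + D$ from the uniform bound a second time, and finally the approximation inequality above. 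Sending $\eta \to 0$ gives the stated bound, since $2D = \Delta^2 \sqrt{(H_\epsilon(\mathcal F) + \log(2/\delta))/(2n)}$ matches the Hoeffding term exactly.

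The main subtlety is the bookkeeping of constants: the deviation $D$ is incurred twice (once for $f_n$ and once for $\tilde f$), so the loss range $\Delta^2/2$ appearing inside Hoeffding must double back up to $\Delta^2$ in the final bound. This requires recognizing that both $f_n$ and the approximant $\tilde f$ lie inside the cover, so that a single uniform-convergence event covers both sides of the chain and the factor two cancels cleanly against the one-half in the definition of the return error.
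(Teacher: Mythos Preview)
Your proposal is correct and follows essentially the same approach as the paper: a uniform deviation bound over the finite cover via Hoeffding and a union bound (with the loss bounded in $[0,\Delta^2/2]$), the standard chaining $\mathcal R_{\mathcal E}(f_n) \le \mathcal R_n(f_n) + D \le \mathcal R_n(\tilde f) + D \le \mathcal R_{\mathcal E}(\tilde f) + 2D$, and the approximation step relating $\mathcal R_{\mathcal E}(\tilde f)$ to $\mathcal R_{\mathcal E}(\mathcal F)$. The only cosmetic difference is that the paper packages your approximation inequality as a separate lemma (\cref{lem:ve-norm}) and compares against $f_\epsilon \doteq \argmin_{f\in\mathcal F^\epsilon}\mathcal R_{\mathcal E}(f)$ rather than an explicit $\tilde f$ near $f^\star$, but the constants and logic are identical.
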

\begin{proof}\vspace{-1em}
    This result combines a standard generalization bound from statistical learning theory that applies Hoeffding's inequality to a finite hypothesis class $\mathcal F^\epsilon$ with \cref{lem:ve-norm} (below). 
    The full proof is in \cref{proof:thm:hoeffding}.
\end{proof}

\begin{lemma}\label{lem:ve-norm}
    Let $\mathcal E$ be an environment, $\mathcal F$ a function class, and $\epsilon > 0$.
    If $\mathcal G$ is an $\epsilon$-cover of $\mathcal F$, then
    \begin{equation*}
        \mathcal R_{\mathcal E}(\mathcal G) \leq \mathcal R_{\mathcal E}(\mathcal F) + \epsilon\Delta + \frac{\epsilon^2}{2}.
    \end{equation*}
\end{lemma}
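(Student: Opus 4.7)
The plan is to pick an arbitrary $f \in \mathcal F$, invoke the $\epsilon$-cover property to get some $g \in \mathcal G$ with $\norm{f - g}_\infty \leq \epsilon$, and then bound the pointwise squared-error difference between $g$ and $f$ before taking expectations and an infimum. Writing $G \doteq \sum_{t=0}^\infty \gamma^t r(y_{t+1})$ for the return, both $f(h_0)$ and $G$ lie in $[\underaccent{\bar}{v}, \bar v]$, so $|f - G| \leq \Delta$ almost surely, and by construction $|g - f| \leq \epsilon$ everywhere.

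First I would expand
\begin{equation*}
    \mathcal R_{\mathcal E}(g) - \mathcal R_{\mathcal E}(f) = \tfrac{1}{2}\mathbb E_{\mathcal E}\bigl[(g - G)^2 - (f - G)^2\bigr]
\end{equation*}
and apply the factorization $a^2 - b^2 = (a+b)(a-b)$ with $a = g - G$ and $b = f - G$. This gives
\begin{equation*}
    \mathcal R_{\mathcal E}(g) - \mathcal R_{\mathcal E}(f) = \mathbb E_{\mathcal E}\bigl[(f - G)(g - f)\bigr] + \tfrac{1}{2}\mathbb E_{\mathcal E}\bigl[(g - f)^2\bigr].
\end{equation*}

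Second, I would bound the two terms using the pointwise estimates above: the cross term satisfies $|\mathbb E_{\mathcal E}[(f - G)(g - f)]| \leq \Delta\epsilon$, and the quadratic term is at most $\epsilon^2/2$. Hence $\mathcal R_{\mathcal E}(g) \leq \mathcal R_{\mathcal E}(f) + \epsilon\Delta + \epsilon^2/2$ for this particular $g$.

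Finally, since $\mathcal R_{\mathcal E}(\mathcal G) \leq \mathcal R_{\mathcal E}(g)$ by definition of the infimum, the inequality $\mathcal R_{\mathcal E}(\mathcal G) \leq \mathcal R_{\mathcal E}(f) + \epsilon\Delta + \epsilon^2/2$ holds for every $f \in \mathcal F$, and taking the infimum over $f$ yields the claim. There is no real obstacle here — the argument is essentially a one-line Taylor-type expansion of the squared loss; the only thing worth being careful about is that the bound $|f - G| \leq \Delta$ uses the boundedness of both the function class and the return, which is guaranteed by the range $[\underaccent{\bar}{v}, \bar v]$ assumed in the setup.
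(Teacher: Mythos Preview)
Your proposal is correct and follows essentially the same approach as the paper. The only cosmetic difference is in the algebra: the paper bounds $(g-R_0)^2 \leq (\epsilon + |f-R_0|)^2$ and expands, whereas you use the factorization $a^2 - b^2 = (a+b)(a-b)$; both lead to the identical cross-term bound $\epsilon\Delta$ and quadratic term $\epsilon^2/2$. Your version, which fixes an arbitrary $f$ and takes the infimum at the end, is arguably slightly cleaner than the paper's choice of $f = \argmin_{f'\in\mathcal F}\mathcal R_{\mathcal E}(f')$, since the minimizer need not be attained.
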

\begin{proof}\vspace{-0.5em}
    See \cref{proof:lem:ve-norm}.\vspace{-0.5em}
\end{proof}

While the Hoeffding bound above does not guarantee that function classes with large metric entropy are less suitable for learning, it suggests that a good value function is more easily learned if $H_\epsilon(\mathcal F)$ is small.
The focus on metric entropy as a measure for the complexity of hypothesis classes is well established in statistical learning theory \citep{wainwright2019high,haussler1992decision,allard2024ellipsoid}.
We will now analyze the metric entropies of the classes $\mathcal F_m$ and $\mathcal F_{\lambda, L}$, and compare their return errors across different environments.
We begin by computing the metric entropies.

\begin{lemma}\label{lem:H-window}
    Let $m \in \mathbb N_0$ be a window length. The metric entropy of $\mathcal F_{m}$ is, for all $\epsilon > 0$,
    \begin{equation*}
        H_\epsilon(\mathcal F_m) = |\mathcal Y|^{m}\log\bigg\lceil\frac{\Delta}{2\epsilon}\bigg\rceil\text.
    \end{equation*}
    Thus, as a function of $m$, $H_\epsilon(\mathcal F_m) \in \Theta(|\mathcal Y|^{m})$.
\end{lemma}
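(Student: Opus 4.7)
The plan is to reduce the metric entropy computation to an elementary covering number in a finite-dimensional hypercube under the $\ell_\infty$ metric, and then compute that covering number directly.

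First, I would observe that since $\mathcal Y$ is one-hot (in particular, finite with $|\mathcal Y|$ elements), the set $\mathcal Y^m$ of length-$m$ windows has exactly $|\mathcal Y|^m$ elements. Consequently, the map sending $f : \mathcal Y^m \to [\underaccent{\bar}{v}, \bar v]$ to its table of values $(f(w))_{w \in \mathcal Y^m} \in [\underaccent{\bar}{v}, \bar v]^{|\mathcal Y|^m}$ is a bijection between the underlying function class and the hypercube $[\underaccent{\bar}{v}, \bar v]^{|\mathcal Y|^m}$. Moreover, every window $w \in \mathcal Y^m$ is realized as $\operatorname{win}_m(h)$ for some $h \in \mathcal Y^\infty$ (just prepend an arbitrary infinite tail), so for $f, g \in \mathcal F_m$ the sup-norm over $\mathcal Y^\infty$ reduces to
\begin{equation*}
    \norm{f \circ \operatorname{win}_m - g \circ \operatorname{win}_m}_\infty = \max_{w \in \mathcal Y^m} |f(w) - g(w)|,
\end{equation*}
which is precisely the $\ell_\infty$ distance between the associated tables. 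Therefore $N_\epsilon(\mathcal F_m)$ equals the $\epsilon$-covering number of $[\underaccent{\bar}{v}, \bar v]^{|\mathcal Y|^m}$ under $\ell_\infty$.

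Next, I would use the standard fact that covering numbers for hypercubes under $\ell_\infty$ factorize exactly over coordinates. The upper bound follows because an $\ell_\infty$-ball of radius $\epsilon$ in $\mathbb R^N$ is a Cartesian product of $N$ one-dimensional $\epsilon$-intervals, so a product of one-dimensional $\epsilon$-nets gives an $\epsilon$-net. The matching lower bound follows by projecting any $\epsilon$-cover onto a single coordinate, which must still $\epsilon$-cover that coordinate's range. For the one-dimensional problem, an interval of length $\Delta$ is covered by placing centers of $2\epsilon$-wide intervals along it, requiring $\lceil \Delta/(2\epsilon)\rceil$ centers for coverage, and a packing argument (any two cover points must be within $2\epsilon$ of each other along the interval) shows this number cannot be reduced.

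Combining these two steps yields $N_\epsilon(\mathcal F_m) = \lceil \Delta/(2\epsilon)\rceil^{|\mathcal Y|^m}$, and taking the logarithm gives the claimed expression for $H_\epsilon(\mathcal F_m)$. The asymptotic statement $H_\epsilon(\mathcal F_m) \in \Theta(|\mathcal Y|^m)$ then follows immediately, since $\log\lceil\Delta/(2\epsilon)\rceil$ is a fixed positive constant whenever $\epsilon < \Delta/2$. The only subtle point is confirming that the covering-number factorization holds with equality (not merely as an upper bound) under the $\ell_\infty$ metric; this is where I would be most careful, but it is a textbook fact and the projection argument above makes it concrete.
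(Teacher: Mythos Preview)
Your reduction of $(\mathcal F_m, \norm{\cdot}_\infty)$ to the hypercube $([\underaccent{\bar}{v}, \bar v]^{|\mathcal Y|^m}, \ell_\infty)$ is correct, and the overall strategy---computing the $\ell_\infty$ covering number of a cube coordinate-wise---is essentially what the paper does as well. The paper phrases it slightly differently: it directly constructs the product set $\mathcal F_m^\epsilon = \{f\circ\operatorname{win}_m \mid f:\mathcal Y^m \to \mathcal V_\epsilon\}$ and shows it is simultaneously an $\epsilon$-cover and a $2\epsilon$-packing, then invokes $M_{2\epsilon}\leq N_\epsilon$ to pin down $N_\epsilon$ exactly.

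There is, however, a genuine gap in your lower-bound argument. Projecting an $\epsilon$-cover $S$ of the cube onto coordinate $i$ does yield an $\epsilon$-cover of $[\underaccent{\bar}{v},\bar v]$, but this only gives $|\pi_i(S)|\geq \lceil\Delta/(2\epsilon)\rceil$ for each $i$ separately; it does \emph{not} give $|S|\geq \lceil\Delta/(2\epsilon)\rceil^{|\mathcal Y|^m}$. A set of the form $\{(v,v,\dots,v):v\in\mathcal V_\epsilon\}$ has all projections of size $\lceil\Delta/(2\epsilon)\rceil$ but only $\lceil\Delta/(2\epsilon)\rceil$ elements. The factorization is true, but it requires the packing route you allude to for the 1D case, lifted to the product: since $\lceil\Delta/(2\epsilon)\rceil-1$ intervals of length $2\epsilon$ fail to cover $[\underaccent{\bar}{v},\bar v]$, one can place $\lceil\Delta/(2\epsilon)\rceil$ points in $[\underaccent{\bar}{v},\bar v]$ with pairwise distance $\geq 2\epsilon$; the Cartesian power of this set then has $\lceil\Delta/(2\epsilon)\rceil^{|\mathcal Y|^m}$ points that are pairwise $\geq 2\epsilon$ apart in $\ell_\infty$, so no $\epsilon$-ball contains two of them. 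This is exactly the packing argument the paper uses, and it is what your sketch needs in place of the projection step.
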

\begin{proof}\vspace{-1em}
    See \cref{proof:lem:H-window}.\vspace{-0.5em}
\end{proof}

\begin{lemma}\label{lem:H-trace}
    Let $\lambda \in [0, 1)$ and $L > 0$ be a Lipschitz constant. The metric entropy of $\mathcal F_{\lambda, L}$ satisfies, for all $\epsilon > 0$,
    \begin{align*}
        H_\epsilon(\mathcal F_{\lambda, L}) &\leq \log\left\lceil\frac{\Delta}{\epsilon}\right\rceil|\mathcal Y|\left(\frac{2L}{\epsilon}\right)^{\mathrlap{d_\lambda}}\text{,\hspace{1em} and}\\
        H_\epsilon(\mathcal F_{\lambda, L}) &\leq \log\left\lceil\frac{\Delta}{\epsilon}\right\rceil\biggl\lceil\frac{2L\sqrt{|\mathcal Y| - 1}}{\epsilon}\biggr\rceil^{\mathrlap{|\mathcal Y| - 1}}\text.\hspace{2em}
    \end{align*}
    Thus, as a function of $\lambda$ and $L$, the metric entropy satisfies
    \begin{equation*}
        H_\epsilon(\mathcal F_{\lambda, L}) \in \mathcal O\bigl(L^{\min\{d_\lambda, |\mathcal Y| - 1\}}\bigr).
    \end{equation*}
\end{lemma}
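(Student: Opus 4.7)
My plan is to build a cover of $\mathcal F_{\lambda, L}$ in sup-norm by combining a cover of the trace space $\mathcal Z_\lambda$ with a uniform discretization of the range $[\underaccent{\bar}{v}, \bar v]$, and then to bound the covering number of $\mathcal Z_\lambda$ in two complementary ways---one fractal (exploiting the self-similarity underlying \cref{lem:concentration}) and one ambient (exploiting that $\mathcal Z_\lambda$ lies in a $(|\mathcal Y|-1)$-dimensional affine simplex by \cref{eq:subspace}).

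For the reduction, let $S$ be an $\epsilon_1$-cover of $\mathcal Z_\lambda$ and $Q$ a uniform $\epsilon_2$-grid of $[\underaccent{\bar}{v}, \bar v]$ of size $\lceil\Delta/\epsilon_2\rceil$. For every $L$-Lipschitz $f$, I associate the quantized nearest-neighbor proxy $\tilde f(z) = \operatorname{round}_Q\bigl(f(s(z))\bigr)$, where $s(z) \in S$ is the nearest cover point to $z$. The triangle inequality gives $\|f - \tilde f\|_\infty \le L\epsilon_1 + \epsilon_2/2$, and $\tilde f$ is fully determined by its $|S|$ quantized values, so the number of such proxies is at most $\lceil\Delta/\epsilon_2\rceil^{|S|}$. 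Choosing $\epsilon_1 = \epsilon/(2L)$ and $\epsilon_2 = \epsilon$ therefore yields the master inequality $H_\epsilon(\mathcal F_{\lambda, L}) \le N_{\epsilon/(2L)}(\mathcal Z_\lambda)\log\lceil\Delta/\epsilon\rceil$, and it only remains to bound $N_{\epsilon/(2L)}(\mathcal Z_\lambda)$ in the two ways.

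For the fractal bound, I would show that every $z_\lambda(h) \in \mathcal Z_\lambda$ lies within $\lambda^m$ (in $\ell_2$) of the length-$m$ trace $z_\lambda(\operatorname{win}_m(h)) \in \mathcal Z_\lambda^m$: the truncation tail $(1-\lambda)\sum_{k \ge m}\lambda^k y_{-k}$ is a \emph{non-negative} vector whose coordinates sum to $\lambda^m$, so its $\ell_2$-norm is at most $\lambda^m$. Together with $|\mathcal Z_\lambda^m| \le |\mathcal Y|^m$ and $m = \lceil\log(2L/\epsilon)/\log(1/\lambda)\rceil$, this gives $N_{\epsilon/(2L)}(\mathcal Z_\lambda) \le |\mathcal Y|(2L/\epsilon)^{d_\lambda}$, which is the first stated bound. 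The ambient bound follows by covering the $(|\mathcal Y|-1)$-dimensional affine subspace containing $\mathcal Z_\lambda$ with an $\ell_\infty$-grid of spacing $\epsilon/(2L\sqrt{|\mathcal Y|-1})$ in affine coordinates, giving cardinality $\lceil 2L\sqrt{|\mathcal Y|-1}/\epsilon\rceil^{|\mathcal Y|-1}$; the asymptotic $\mathcal O(L^{\min\{d_\lambda, |\mathcal Y|-1\}})$ claim is then immediate from taking the smaller exponent. The main obstacle is getting the constants right in the fractal bound: plugging \cref{lem:concentration} in directly would only give a cover at radius $\sqrt 2\, \lambda^m$ and would leave an extra $\sqrt 2$ factor inside the power, so the sharper non-negativity argument above---improving $\sqrt 2\, \lambda^m$ to $\lambda^m$ by using $\ell_2 \le \ell_1$ for non-negative vectors---is the essential step for the exact form stated in the lemma. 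Everything else is routine covering-number bookkeeping.
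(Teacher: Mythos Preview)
Your approach mirrors the paper's: reduce to an $\epsilon/(2L)$-cover $S$ of $\mathcal Z_\lambda$, bound $|S|$ once via the fractal cover $\mathcal Z_\lambda^m$ (with your tail estimate $\|z - z^m\|\le\lambda^m$, which the paper obtains the same way) and once via a grid in the $(|\mathcal Y|-1)$-dimensional hyperplane of \cref{eq:subspace}, and discretize the range into $\lceil\Delta/\epsilon\rceil$ levels.

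There is, however, one technical step you skipped that the paper makes explicit. Your proxy $\tilde f(z) = \operatorname{round}_Q\bigl(f(s(z))\bigr)$ requires evaluating $f$ at the cover points $s(z)\in S$, but neither of your covers lies in $\operatorname{dom}(f)=\mathcal Z_\lambda$: the finite-length traces in $\mathcal Z_\lambda^m$ are not infinite-length traces, and grid points in the affine hyperplane are certainly not traces. Indeed, the very reason your sharpened $\lambda^m$ radius (versus the $\sqrt 2\,\lambda^m$ of \cref{lem:concentration}) is available is that truncation lands you \emph{outside} $\mathcal Z_\lambda$. The paper closes this gap by first extending the inner $L$-Lipschitz function $\hat f:\mathcal Z_\lambda\to[\underaccent{\bar}{v},\bar v]$ to an $L$-Lipschitz $\bar f$ on all of $[0,1]^{|\mathcal Y|}$ via Kirszbraun's theorem, and then applying exactly your rounding construction to $\bar f$. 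Without such an extension (or a workaround that costs an extra factor of $2$ in the cover radius), the expression $f(s(z))$ is simply undefined and the master inequality does not follow as written.
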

\begin{proof}\vspace{-0.5em}
    Let $\epsilon > 0$, $\lambda \in [0, 1)$, and $L > 0$.
    We will construct two different $\epsilon$-covers of $\mathcal F_{\lambda, L}$ to establish the two upper bounds.
    The difference between these two is how the (infinite) set $\mathcal Z_\lambda$ is approximated.
    Let $S \subset [0, 1]^{|\mathcal Y|}$ be a finite set that $\frac{\epsilon}{2L}$-covers $\mathcal Z_\lambda$ (in the Euclidean norm).
    Then, for each $z \in \mathcal Z_\lambda$, there exists a $w(z) \in S$ such that $\norm{z - w(z)} \leq \frac{\epsilon}{2L}$.
    We will now show that the set $\mathcal F^\epsilon(S) \doteq \{f \circ w \circ z_\lambda \mid f: S \to \mathcal V_{\frac{\epsilon}{2}}\}$ $\epsilon$-covers $\mathcal F_{\lambda, L}$ (in the sup-norm), where $\mathcal V_\epsilon$ is a set of $\lceil\Delta/(2\epsilon)\rceil$ points in $[\underaccent{\bar}{v}, \bar v]$ that $\epsilon$-covers this interval (see proof of \cref{lem:H-window}).
    Let $f \in \mathcal F_{\lambda, L}$.
    Then, there exists an $L$-Lipschitz function $\hat f : \mathcal Z_\lambda \to [\underaccent{\bar}{v}, \bar v]$ such that $f = \hat f \circ z_\lambda$.
    Kirszbraun's theorem \citep[e.g.,][Theorem 2.10.43]{federer2014geometric} tells us that the $L$-Lipschitz function $\hat f$ can be extended to an $L$-Lipschitz function $\bar f : [0, 1]^{|\mathcal Y|} \to [\underaccent{\bar}{v}, \bar v]$ with the property that $\bar f|_{\mathcal Z_\lambda} = \hat f$. 
    For every $s \in S$, define $\bar g(s) \in \mathcal V_{\frac{\epsilon}{2}}$ such that $|\bar g(s) - \bar f(s)| \leq \frac{\epsilon}{2}$.
    This is possible because $\mathcal V_{\frac{\epsilon}{2}}$ is an $\frac{\epsilon}{2}$-cover of $[\underaccent{\bar}{v}, \bar v]$.
    Now, define $g \in \mathcal F^\epsilon(S)$ as $g \doteq \bar g \circ w \circ z_\lambda$.
    Then, for all histories $h \in \mathcal Y^\infty$, with $z \doteq z_\lambda(h)$,\looseness=-1
    \begin{multline*}
        |f(h) - g(h)| = |\bar f(z) - \bar g(w(z))|\\[0.2em]
        \begin{aligned}
            &\leq |\bar f(z) - \bar f(w(z))| + |\bar f(w(z)) - \bar g(w(z))|\\
            &\leq L\norm{z - w(z)} + \epsilon/2 \leq \epsilon.
        \end{aligned}
    \end{multline*}
    Thus, the metric entropy of $\mathcal F_{\lambda, L}$ satisfies
    \begin{equation}\label{eq:S-entropy}
        H_\epsilon(\mathcal F_{\lambda, L}) \leq \log|\mathcal F^\epsilon(S)| = |S|\log|\mathcal V_{\frac{\epsilon}{2}}| = |S|\log\lceil\Delta/\epsilon\rceil.
    \end{equation}
    To get the first inequality, we define the set $S_1 \subset [0, 1]^{|\mathcal Y|}$ as the following set of length-$m$ memory traces:
    \begin{equation*}
        S_1 \doteq \{z_\lambda(h) \mid h \in \mathcal Y^m\}\text,\quad\text{where}\quad m \doteq \biggl\lceil\frac{\log(2L/\epsilon)}{\log(1/\lambda)}\biggr\rceil_+\text,
    \end{equation*}
    with $(\cdot)_+ \doteq \max\{0, \cdot\}$.
    We first show that $S_1$ $\frac{\epsilon}{2L}$-covers $\mathcal Z_\lambda$, and then compute the cardinality of $S_1$.
    Let $z \in \mathcal Z_\lambda$ and $h \in \mathcal Y^\infty$ such that $z = z_\lambda(h)$ (this history exists by definition of $\mathcal Z_\lambda$).
    Let $z^m \doteq z_\lambda(\operatorname{win}_m(h)) \in S_1$.
    Then,
    \begin{align*}
        \norm{z - z^m} &\leq (1 - \lambda)\sum_{\mathclap{k=m}}^\infty \lambda^k \underbrace{\norm{y_{t-k}}}_{1}\\[-0.2em]
        &= \lambda^{m} \leq \exp\left(\log\lambda\cdot\frac{\log(2L/\epsilon)}{\log(1/\lambda)}\right)
        = \frac{\epsilon}{2L}\text,
    \end{align*}
    proving that $S_1$ is an $\frac{\epsilon}{2L}$-cover. The cardinality of $S_1$ is
    \begin{align*}
        |S_1| &= |\mathcal Y|^m\\&\leq \exp\left\{\log|\mathcal Y|\left(\frac{\log(2L/\epsilon)}{\log(1/\lambda)} + 1\right)\right\} = |\mathcal Y|\left(\frac{2L}{\epsilon}\right)^{\mathrlap{d_\lambda}}.\hspace{0.4em}
    \end{align*}
    The first result then follows from \cref{eq:S-entropy}.

    For the second inequality, we construct a different set $S_2 \subset [0, 1]^{|\mathcal Y|}$.
    Let $e_0 \doteq \frac{1}{\sqrt{|\mathcal Y|}}\mathbf 1$, and extend $e_0$ to an orthonormal basis $e_0, e_1, \dots, e_{|\mathcal Y| - 1}$ of $\mathbb R^{|\mathcal Y|}$.
    We now define
    \begin{equation*}
            S_2 \doteq \biggl\{\frac{1}{\sqrt{|\mathcal Y|}}e_0 + \sum_{i = 1}^{\mathclap{|\mathcal Y| - 1}}c_i e_i \bigm| c_1, \dots, c_{|\mathcal Y| - 1} \in G_\delta\biggr\}\text,
    \end{equation*}
    where $\delta \doteq \frac{\epsilon}{2L\sqrt{|\mathcal Y| - 1}}$, and where $G_\delta$ is a finite set of $\lceil1/\delta\rceil$ points in $[-1, 1]$ that $\delta$-covers this interval.
    Such a set $G_\delta$ exists by the argument presented in the proof of \cref{lem:H-window}:
    taking $\lceil1/\delta\rceil$ uniformly spaced points with equal distance $2\delta$ as the centers of $\delta$-balls, a volume of $\lceil1/\delta\rceil (2\delta) \geq 2$ is $\delta$-covered, which is enough to cover $[-1, 1]$.

    We now show that $S_2$ is an $\frac{\epsilon}{2L}$-cover of $\mathcal Z_\lambda$.
    Let $z \in \mathcal Z_\lambda$.
    Then, using \cref{eq:subspace}, we have $\norm{z} \leq \norm{z}_1 = \sum_i z_i = 1$.
    This implies that $z^\top e_i \in [-1, 1]$ for all $i \in \{0, \dots, |\mathcal Y| - 1\}$.
    Thus, for each $i$, there exists a point $k_i \in G_\delta$ such that $|z^\top e_i - k_i| \leq \delta$.
    Now, define $w \in S_2$ as
    \begin{equation*}
        w \doteq \frac{1}{\sqrt{|\mathcal Y|}}e_0 + \sum_{i = 1}^{\mathclap{|\mathcal Y| - 1}}k_i e_i.
    \end{equation*}
    From \cref{eq:subspace}, we have $z^\top e_0 = \frac{1}{\sqrt{|\mathcal Y|}}$.
    Thus,
    \begin{align*}
        \norm{z - w}^2 &= \big\lVert\sum_{i = 1}^{\mathclap{|\mathcal Y| - 1}}(z^\top e_i - k_i)e_i\big\rVert^2\\
        &= \sum_{i = 1}^{\mathclap{|\mathcal Y| - 1}}|z^\top e_i - k_i|^2\\
        &\leq (|\mathcal Y| - 1)\delta^2 = \left(\frac{\epsilon}{2L}\right)^2.
    \end{align*}
    Hence, $S_2$ $\frac{\epsilon}{2L}$-covers $\mathcal Z_\lambda$.
    The cardinality of $S_2$ is
    \begin{equation*}
        |S_2| = |G_\delta|^{|\mathcal Y| - 1} = \left\lceil\frac{2L\sqrt{|\mathcal Y| - 1}}{\epsilon}\right\rceil^{\mathrlap{|\mathcal Y| - 1}}\text,\hspace{2em}
    \end{equation*}
    and the result follows from \cref{eq:S-entropy}.\vspace{-0.5em}
\end{proof}

These expressions are difficult to compare without additional context.
In the following two sections we show that the efficiency of learning with windows and memory traces, according to \cref{thm:hoeffding} and our metric entropy upper bounds, is \emph{equivalent} for $\lambda < \frac{1}{2}$ (``fast forgetting''), while memory traces can be significantly more efficient when $\lambda \geq \frac{1}{2}$ (``slow forgetting'').

\subsection{Fast forgetting: $\lambda < \frac{1}{2}$}
Our first result states that, if there is a window length $m$ such that $\mathcal F_m$ achieves a certain return error at the cost of a certain complexity (measured by the metric entropy), then there is a $\lambda < \frac{1}{2}$ and a Lipschitz constant $L > 0$ such that $\mathcal F_{\lambda, L}$ achieves the same return error and the same complexity, up to constant factors.
Thus, there exists no environment where windows outperform memory traces \emph{in general}.
\begin{theorem}[Window to trace]
    For every window length $m \in \mathbb N$ and every $\lambda < \frac{1}{2}$, there exists a Lipschitz constant $L(m) > 0$ such that, for every $\epsilon > 0$ and every environment $\mathcal E$,\vspace{-0.5ex}
    \begin{equation*}
        \mathcal R_{\mathcal E}\bigl(\mathcal F_{\lambda, L(m)}\bigr) \leq \mathcal R_{\mathcal E}(\mathcal F_m)\text,
    \end{equation*}
    and, taking $H_\epsilon\bigl(\mathcal F_{\lambda, L(m)}\bigr)$ as a function of $m$,
    \begin{equation*}
        H_\epsilon\bigl(\mathcal F_{\lambda, L(m)}\bigr) \in \mathcal O(|\mathcal Y|^m) = \mathcal O(H_\epsilon(\mathcal F_m)).
    \end{equation*}
\end{theorem}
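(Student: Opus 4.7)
The plan is to reduce any window-based value function to a Lipschitz function of memory traces using \cref{lem:separation}, which guarantees that traces corresponding to distinct length-$m$ windows are quantitatively separated when $\lambda < \frac12$.

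First, I would define the candidate Lipschitz constant
$$L(m) \doteq \frac{\Delta}{\sqrt{2}(1 - 2\lambda)\lambda^{m-1}},$$
which is finite precisely because $\lambda < \frac12$. Given any $f \in \mathcal F_m$ with $f = g \circ \operatorname{win}_m$ for some $g : \mathcal Y^m \to [\underaccent{\bar}{v}, \bar v]$, I would define $\tilde g : \mathcal Z_\lambda \to [\underaccent{\bar}{v}, \bar v]$ by $\tilde g(z_\lambda(h)) \doteq g(\operatorname{win}_m(h))$ for each $h \in \mathcal Y^\infty$. Well-definedness requires checking that if $z_\lambda(h) = z_\lambda(\bar h)$, then $\operatorname{win}_m(h) = \operatorname{win}_m(\bar h)$; this is the contrapositive of \cref{lem:separation}, which guarantees $\norm{z_\lambda(h) - z_\lambda(\bar h)} \geq \sqrt{2}(1 - 2\lambda)\lambda^{m-1} > 0$ whenever the windows differ.

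Next, I would verify that $\tilde g$ is $L(m)$-Lipschitz on $\mathcal Z_\lambda$. For any two traces $z = z_\lambda(h)$ and $\bar z = z_\lambda(\bar h)$, either the length-$m$ windows of $h$ and $\bar h$ agree, in which case $\tilde g(z) = \tilde g(\bar z)$ and the inequality is trivial, or they differ, in which case
$$|\tilde g(z) - \tilde g(\bar z)| \leq \Delta = L(m) \cdot \sqrt{2}(1 - 2\lambda)\lambda^{m-1} \leq L(m)\norm{z - \bar z},$$
again by \cref{lem:separation}. Thus $\tilde g \circ z_\lambda \in \mathcal F_{\lambda, L(m)}$, and since $\tilde g \circ z_\lambda$ and $f$ agree as functions on $\mathcal Y^\infty$, their return errors are identical. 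Taking the infimum over $f \in \mathcal F_m$ yields $\mathcal R_\mathcal E(\mathcal F_{\lambda, L(m)}) \leq \mathcal R_\mathcal E(\mathcal F_m)$.

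For the entropy bound, I would apply \cref{lem:H-trace} with $L = L(m)$, giving $H_\epsilon(\mathcal F_{\lambda, L(m)}) \in \mathcal O(L(m)^{d_\lambda})$. The key simplification is the identity
$$\lambda^{-d_\lambda} = \exp\Bigl(\tfrac{\log|\mathcal Y|}{\log(1/\lambda)} \cdot \log(1/\lambda)\Bigr) = |\mathcal Y|,$$
so that $L(m)^{d_\lambda}$ is proportional to $\lambda^{-(m-1)d_\lambda} = |\mathcal Y|^{m-1}$, which is $\mathcal O(|\mathcal Y|^m)$ as required, matching $H_\epsilon(\mathcal F_m)$ from \cref{lem:H-window} up to constants. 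The only mildly delicate step is the well-definedness of $\tilde g$ and the uniform Lipschitz bound, both of which hinge on the assumption $\lambda < \frac12$ via \cref{lem:separation}; the rest is bookkeeping.
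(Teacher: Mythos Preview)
Your proposal is correct and follows essentially the same route as the paper: the same choice of $L(m)=\Delta/\bigl(\sqrt{2}(1-2\lambda)\lambda^{m-1}\bigr)$, the same use of \cref{lem:separation} to show that every $f\in\mathcal F_m$ factors through an $L(m)$-Lipschitz function of the trace, and the same application of \cref{lem:H-trace} together with the identity $\lambda^{-d_\lambda}=|\mathcal Y|$ for the entropy bound. The only cosmetic difference is that the paper phrases the first step via the inverse $f\circ z_\lambda^{-1}$ (using that \cref{lem:separation} makes $z_\lambda$ injective for $\lambda<\tfrac12$), whereas you define $\tilde g$ on the image and verify well-definedness directly; these are the same argument.
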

\begin{proof}\vspace{-0.5em}
    Let $m \in \mathbb N$ and $\lambda \in [0, \frac{1}{2})$.
    Define $L(m)$ as
    \begin{equation*}
        L \doteq \frac{\Delta}{\sqrt{2}(1 - 2\lambda)\lambda^{m - 1}}.
    \end{equation*}
    We begin by showing that $\mathcal F_m \subset \mathcal F_{\lambda, L}$, implying that $\mathcal R_{\mathcal E}(\mathcal F_{\lambda, L}) \leq \mathcal R_{\mathcal E}(\mathcal F_m)$ for any environment $\mathcal E$.
    Let $f \in \mathcal F_m$.
    Since $\lambda < \frac{1}{2}$, \cref{lem:separation} guarantees that $z_\lambda$ is invertible, and we can define $g: \mathcal Z_\lambda \to [\underaccent{\bar}{v}, \bar v]$ as $g \doteq f \circ z^{-1}_\lambda$.
    We now show that $g$ is $L$-Lipschitz, implying that $f = g \circ z_\lambda \in \mathcal F_{\lambda, L}$.
    The Lipschitz constant of $g$ is the supremum of the Lipschitz ratio:
    \begin{equation*}
        \operatorname{Lip}(g) = \sup_{\substack{z, \bar z \in \mathcal Z_\lambda\\z \neq \bar z}} \frac{|g(z) - g(\bar z)|}{\norm{z - \bar z}}.
    \end{equation*}
    Let $z \neq \bar z \in \mathcal Z_\lambda$ and define $h \doteq z_\lambda^{-1}(z)$ and  $\bar h \doteq z_\lambda^{-1}(\bar z)$.
    If $\operatorname{win}_m(h) = \operatorname{win}_m(\bar h)$, then $g(z) = g(\bar z)$, making the Lipschitz ratio $0 \leq L$.
    Otherwise, if $\operatorname{win}_m(h) \neq \operatorname{win}_m(\bar h)$, \cref{lem:separation} guarantees that $\norm{z - \bar z} \geq \sqrt 2 (1 - 2\lambda)\lambda^{m-1}$.
    As $|g(z) - g(\bar z)| \leq \Delta$, we have $\operatorname{Lip}(g) \leq L$, showing that $f \in \mathcal F_{\lambda, L}$.

    Given $\epsilon > 0$, the metric entropy satisfies (by \cref{lem:H-trace})
    \begin{align*}
        H_\epsilon(\mathcal F_{\lambda, L}) &\leq \log\left\lceil\frac{\Delta}{\epsilon}\right\rceil|\mathcal Y|\biggl(\frac{\sqrt{2}\Delta}{(1 - 2\lambda)\lambda^{m - 1}\epsilon}\biggr)^{\mathrlap{d_\lambda}}\\
        &= \underbrace{\log\left\lceil\frac{\Delta}{\epsilon}\right\rceil\biggl(\frac{\sqrt{2}\Delta}{(1 - 2\lambda)\epsilon}\biggr)^{d_\lambda}}_{c}|\mathcal Y|\left(\frac{1}{\lambda}\right)^{(m-1)d_\lambda}\\[-0.5em]
        &= c |\mathcal Y|\exp\left\{\log(1/\lambda)(m-1)\frac{\log|\mathcal Y|}{\log(1 / \lambda)}\right\}\\
        &= c |\mathcal Y|^m \in \mathcal O(|\mathcal Y|^m).\qedhere
    \end{align*}\vspace{-1em}
\end{proof}

The next result is similar in spirit, as it shows that for every $\lambda$ and Lipschitz constant $L$ there exists a window length $m$ such that $\mathcal F_m$ has a return error comparable to $\mathcal F_{\lambda, L}$, but it does not go quite as far in terms of complexity.
In the next section we show why: there \emph{are} environments where memory traces are significantly more efficient than windows.
\begin{theorem}[Trace to window]\label{thm:ttw}
    For every $\lambda \in [0, 1)$, Lipschitz constant $L > 0$, and $\epsilon \in (0, L)$, there exists a window length $m(\lambda, L) \in \mathbb N_0$ such that, for every environment $\mathcal E$,
    \begin{equation*}
        \mathcal R_{\mathcal E}\bigl(\mathcal F_{m(\lambda, L)}\bigr) \leq \mathcal R_{\mathcal E}(\mathcal F_{\lambda, L}) + \epsilon\Delta + \frac{\epsilon^2}{2}
    \end{equation*}
    and, taking $H_\epsilon\bigl(\mathcal F_{m(\lambda, L)}\bigr)$ as a function of $\lambda$ and $L$,
    \begin{equation*}
        H_\epsilon\bigl(\mathcal F_{m(\lambda, L)}\bigr) \in \mathcal O(L^{d_\lambda}).
    \end{equation*}
\end{theorem}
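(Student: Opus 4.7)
The plan is to use the concentration bound of \cref{lem:concentration} to show that every $L$-Lipschitz trace-based value function is, in sup-norm, $\epsilon$-close to some length-$m$ window function, for an appropriately chosen $m(\lambda, L)$; the return-error bound then follows from \cref{lem:ve-norm}, and the metric-entropy estimate from \cref{lem:H-window}. I would set
\begin{equation*}
    m(\lambda, L) \doteq \Biggl\lceil\frac{\log(\sqrt{2}L/\epsilon)}{\log(1/\lambda)}\Biggr\rceil_+,
\end{equation*}
so that $L\sqrt{2}\,\lambda^{m(\lambda, L)} \le \epsilon$. The hypothesis $\epsilon < L$ guarantees that the argument of the ceiling is positive, and the trivial case $\lambda = 0$ is handled by $m = 1$, where windows and traces coincide.

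For any $f \in \mathcal F_{\lambda, L}$, write $f = \hat f \circ z_\lambda$ with $\hat f$ being $L$-Lipschitz, and for each window $w \in \mathcal Y^{m}$ fix a canonical infinite extension $h_w \in \mathcal Y^\infty$ with $\operatorname{win}_m(h_w) = w$. Define the window-based approximation $\bar f \in \mathcal F_{m}$ by $\bar f(h) \doteq f(h_{\operatorname{win}_m(h)})$; this takes values in $[\underaccent{\bar}{v}, \bar v]$. For any $h \in \mathcal Y^\infty$, the histories $h$ and $h_{\operatorname{win}_m(h)}$ share their last $m$ observations, so \cref{lem:concentration} gives $\norm{z_\lambda(h) - z_\lambda(h_{\operatorname{win}_m(h)})} \le \sqrt{2}\,\lambda^{m}$, and the $L$-Lipschitz property of $\hat f$ yields $|f(h) - \bar f(h)| \le L\sqrt{2}\,\lambda^{m} \le \epsilon$. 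Hence $\mathcal F_{m}$ is an $\epsilon$-cover of $\mathcal F_{\lambda, L}$ in the sup-norm, and \cref{lem:ve-norm} immediately delivers the desired return-error inequality.

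For the metric-entropy claim, \cref{lem:H-window} gives $H_\epsilon(\mathcal F_{m}) = |\mathcal Y|^{m}\log\lceil\Delta/(2\epsilon)\rceil$. Substituting the choice of $m$ and using $\log|\mathcal Y|/\log(1/\lambda) = d_\lambda$, I would bound $|\mathcal Y|^{m} \le |\mathcal Y|\,(\sqrt{2}L/\epsilon)^{d_\lambda}$, so that $H_\epsilon(\mathcal F_{m}) \in \mathcal O(L^{d_\lambda})$ as a function of $L$ (with $\lambda$, $\epsilon$, $\Delta$, $|\mathcal Y|$ treated as fixed). The main (minor) obstacle is handling $\lambda = 0$ cleanly and tracking the ceiling in the definition of $m$ carefully enough that the exponent collapses to exactly $d_\lambda$; the Lipschitz-plus-concentration idea itself is mechanical given the earlier lemmas.
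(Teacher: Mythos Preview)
Your proposal is correct and follows the same overall strategy as the paper: choose $m$ so that $L$ times the trace-concentration radius is at most $\epsilon$, conclude that $\mathcal F_m$ $\epsilon$-approximates $\mathcal F_{\lambda,L}$ in sup-norm, invoke \cref{lem:ve-norm} for the return-error bound, and then plug $m$ into \cref{lem:H-window} for the entropy estimate.

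The one noteworthy difference is in how the approximating window function is built. The paper defines $g \doteq \bar f \circ z_\lambda \circ \operatorname{win}_m$, which requires evaluating $\hat f$ at the finite-history trace $z_\lambda(\operatorname{win}_m(h)) \notin \mathcal Z_\lambda$; to make this well-defined it invokes Kirszbraun's theorem to extend $\hat f$ to all of $[0,1]^{|\mathcal Y|}$, and then uses the bound $\norm{z - z^m} \leq \lambda^m$ from the proof of \cref{lem:H-trace}. Your construction instead fixes a canonical infinite extension $h_w$ for each window $w$ and sets $\bar f(h) = \hat f(z_\lambda(h_{\operatorname{win}_m(h)}))$, so $\hat f$ is only ever evaluated on $\mathcal Z_\lambda$ and no extension theorem is needed; the price is the extra $\sqrt 2$ from \cref{lem:concentration}, which is harmless for the asymptotic claim. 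Your route is slightly more elementary; the paper's route gives a marginally smaller $m$.
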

\begin{proof}\vspace{-0.5em}
    Let $\lambda \in [0, 1)$, $L > 0$, and $\epsilon \in (0, L)$.
    Define $m(\lambda, L)$ as
    \begin{equation*}
        m \doteq \biggl\lceil\frac{\log(L / \epsilon)}{\log(1 / \lambda)}\biggr\rceil.
    \end{equation*}
    We begin by showing that for every $f \in \mathcal F_{\lambda, L}$, there exists a $g \in \mathcal F_m$ such that $\norm{f - g}_\infty \leq \epsilon$.
    The return error bound then follows from \cref{lem:ve-norm}.
    Let $f \in \mathcal F_{\lambda, L}$.
    Then, there exists an $L$-Lipschitz function $\hat f : \mathcal Z_\lambda \to [\underaccent{\bar}{v}, \bar v]$ such that $f = \hat f \circ z_\lambda$.
    As in \cref{lem:H-trace}, we can use Kirszbraun's theorem to extend $\hat f$ to an $L$-Lipschitz function $\bar f : [0, 1]^{|\mathcal Y|} \to [\underaccent{\bar}{v}, \bar v]$ with the property that $\bar f|_{\mathcal Z_\lambda} = \hat f$.
    Now define $g \doteq \bar f \circ z_\lambda \circ \operatorname{win}_m \in \mathcal F_m$.
    Let $h \in \mathcal Y^\infty$ be a history, and define $z \doteq z_\lambda(h)$ and $z^m \doteq z_\lambda(\operatorname{win}_m(h))$.
    From the proof of \cref{lem:H-trace}, we know that $\norm{z - z^m} = \lambda^m \leq \frac{\epsilon}{L}$.
    Thus,
    \begin{equation*}
        |f(h) - g(h)| = |\bar f(z) - \bar f(z^m)| \leq L\norm{z - z^m} \leq \epsilon.
    \end{equation*}
    The metric entropy of $\mathcal F_m$ satisfies (\cref{lem:H-window})
    \begin{align*}
        H_\epsilon(\mathcal F_m) &= |\mathcal Y|^{m}\log\bigg\lceil\frac{\Delta}{2\epsilon}\bigg\rceil\\
        &\leq \exp\biggl\{\log |\mathcal Y| \biggl(\frac{\log(L / \epsilon)}{\log(1 / \lambda)} + 1\biggr)\biggr\}\log\bigg\lceil\frac{\Delta}{2\epsilon}\bigg\rceil\\
        &= \log\bigg\lceil\frac{\Delta}{2\epsilon}\bigg\rceil|\mathcal Y|\biggl(\frac{L}{\epsilon}\biggr)^{d_\lambda} \in \mathcal O(L^{d_\lambda}).\qedhere
    \end{align*}\vspace{-0.5em}
\end{proof}

While we also know that $H_\epsilon(\mathcal F_{\lambda, L}) \in \mathcal O(L^{d_\lambda})$, this is not a lower bound.
In fact, \cref{lem:H-trace} gives us a second upper bound, $H_\epsilon(\mathcal F_{\lambda, L}) \in \mathcal O(L^{|\mathcal Y| - 1})$.
In the fast forgetting regime ($\lambda < \frac{1}{2}$), this second bound is not relevant, as the following lemma shows.
\begin{lemma}\label{lem:dY}
    Let $|\mathcal Y| > 1$. If $\lambda < \frac{1}{2}$, then $d_\lambda < |\mathcal Y| - 1$.
\end{lemma}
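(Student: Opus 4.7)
The plan is to unpack the definition $d_\lambda = \log|\mathcal Y| / \log(1/\lambda)$ and separate the argument into a strict and a non-strict step. First I would use the hypothesis $\lambda < \tfrac{1}{2}$ to observe that $\log(1/\lambda) > \log 2 > 0$, so that
\begin{equation*}
    d_\lambda = \frac{\log|\mathcal Y|}{\log(1/\lambda)} < \frac{\log|\mathcal Y|}{\log 2} = \log_2|\mathcal Y|.
\end{equation*}
This is where the strict inequality in the conclusion is generated, so it is important that we use $\lambda < \tfrac{1}{2}$ and not merely $\lambda \leq \tfrac{1}{2}$.

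The remaining task is to show the (non-strict) elementary inequality
\begin{equation*}
    \log_2 k \leq k - 1 \qquad \text{for every integer } k \geq 1,
\end{equation*}
applied with $k = |\mathcal Y|$. I would phrase this equivalently as $2^{k-1} \geq k$ and prove it by a one-line induction: the base case $k = 1$ is $2^0 = 1$, and for the step, $2^k = 2 \cdot 2^{k-1} \geq 2k \geq k + 1$ whenever $k \geq 1$. (Alternatively, one can verify that $f(x) \doteq x - 1 - \log_2 x$ is nonnegative on $[1, \infty)$ by noting $f(1) = 0$ and $f'(x) = 1 - 1/(x \ln 2) \geq 0$ for $x \geq 1/\ln 2$, with the minimum of $f$ on $[1, \infty)$ attained at $x = 1/\ln 2 < 2$; since $f(1) = f(2) = 0$ and $f$ is convex between them by direct check, the bound follows.)

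Chaining the two inequalities yields $d_\lambda < \log_2|\mathcal Y| \leq |\mathcal Y| - 1$, which is the claim. There is no real obstacle here: the only subtle point is that at $k = 2$ the outer bound $\log_2 k \leq k - 1$ is tight, so the strictness in the conclusion must come entirely from the first step, i.e., from $\lambda < \tfrac{1}{2}$ rather than $\lambda \leq \tfrac{1}{2}$. This is consistent with the borderline behavior of trace space noted around \cref{lem:separation} and \cref{fig:sierpinski}, where $\lambda = \tfrac{1}{2}$ is exactly the threshold at which distinct traces can begin to coincide.
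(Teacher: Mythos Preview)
Your main argument is correct and takes a somewhat different, more elementary route than the paper. You isolate the strict inequality in the step $d_\lambda < \log_2|\mathcal Y|$ (using $\lambda < \tfrac12$) and then prove the integer bound $\log_2 k \leq k-1$ via the one-line induction $2^{k-1} \geq k$. The paper instead rewrites the target as $\lambda < |\mathcal Y|^{-1/(|\mathcal Y|-1)}$ and argues that the right-hand side is \emph{increasing} in $|\mathcal Y|$ (using $(1+1/n)^n < \mathrm{e} < n+1$), so it is bounded below by its value $\tfrac12$ at $|\mathcal Y|=2$. Both approaches ultimately encode the same inequality $2^{k-1} \geq k$; yours reaches it with less overhead, while the paper's monotonicity argument yields the sharper threshold $\lambda < |\mathcal Y|^{-1/(|\mathcal Y|-1)}$ as a byproduct, though that extra information is not used anywhere else.

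One caution on the parenthetical alternative: the real-variable claim that $f(x) = x - 1 - \log_2 x \geq 0$ on $[1,\infty)$ is false on the open interval $(1,2)$; convexity together with $f(1)=f(2)=0$ forces $f \leq 0$ there, not $\geq 0$. What actually salvages it for your purposes is that $f$ is increasing on $[1/\ln 2,\infty) \supset [2,\infty)$, so $f(k) \geq f(2) = 0$ for all integers $k \geq 2$. The induction argument is the clean one; I would drop the parenthetical.
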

\begin{proof}\vspace{-1em}
    See \cref{proof:lem:dY}.\vspace{-0.5em}
\end{proof}
Thus, if $\lambda < \frac{1}{2}$, learning with windows is equivalent to learning with memory traces, as far as our results go.
However, as we show next, in the \emph{slow forgetting} regime ($\lambda \geq \frac{1}{2}$), memory traces can be significantly more efficient.

\subsection{Slow forgetting: $\lambda \geq \frac{1}{2}$}
When $\lambda \geq \frac{1}{2}$, we lose the guarantee of separation (\cref{lem:separation}), so memory traces can be arbitrarily close together.
However, not all histories are necessarily relevant for accurately representing a value function.
The following result constructs an environment where most histories are irrelevant.
A large $\lambda$ can push the traces that do matter apart (see \cref{lem:concentration}), such that a smaller Lipschitz constant suffices for estimating the value function.
\begin{theorem}[T-maze]\label{thm:tmaze}
    There exists a sequence $(\mathcal E_k)$ of environments (with constant observation space $\mathcal Y$) with the property that, for every $\epsilon > 0$,
    \begin{alignat*}{3}
        \min_{m \in \mathbb N}\,&\{H_\epsilon(\mathcal F_m) &\;\mid\;&& \mathcal R_{\mathcal E_k}(\mathcal F_m) = 0\} \in\;& \Omega(|\mathcal Y|^k)\text{, and}\\
        \min_{\lambda \in [0, 1)}\min_{L \geq 0}\, &\{H_\epsilon(\mathcal F_{\lambda, L}) &\;\mid\;&& \mathcal R_{\mathcal E_k}(\mathcal F_{\lambda, L}) = 0\} \in\;& \mathcal O(k^{|\mathcal Y| - 1}).
    \end{alignat*}
\end{theorem}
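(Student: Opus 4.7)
My plan is to instantiate $\mathcal{E}_k$ as a formal T-maze with corridor length $k$: at the start of each episode the agent observes either $y^\uparrow$ or $y^\downarrow$ (the \emph{reveal}, determining whether the $\pm 1$ reward sits at the top or bottom junction at the end of the corridor), then observes a fixed corridor symbol $y^\leftrightarrow$ for $k-1$ steps, and finally collects the reward. The observation space $\mathcal{Y}$ is a small one-hot set, augmented if necessary with an episode-boundary symbol to accommodate the infinite-trajectory convention, and is held fixed as $k$ varies. Because the environment is deterministic conditional on the reveal, $\mathcal{R}_{\mathcal{E}_k}(\mathcal{F}) = 0$ holds precisely when $\mathcal{F}$ realizes the value function $v$ of $\mathcal{E}_k$.

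For the window lower bound, I note that at the last corridor step the value depends on the reveal, which lies $k$ steps in the past. Any two histories that differ only in the reveal share the same length-$m$ window for every $m < k$ but produce different values, so $v \notin \mathcal{F}_m$ whenever $m < k$. Hence zero return error forces $m \geq k$, and \cref{lem:H-window} gives $H_\epsilon(\mathcal{F}_m) \geq |\mathcal{Y}|^k \log\lceil\Delta/(2\epsilon)\rceil \in \Omega(|\mathcal{Y}|^k)$.

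For the trace upper bound I choose $\lambda_k = 1 - 1/k$ and argue that $v$ is realizable as an $O(k)$-Lipschitz function of the trace. The central calculation is that the projection $\langle z, y^\uparrow - y^\downarrow\rangle$ isolates the reveal contribution, since every other observation symbol is orthogonal to $y^\uparrow - y^\downarrow$; at position $s$ steps past the reveal this projection equals $\pm(1-\lambda_k)\lambda_k^{s-1}$, which is $\Theta(1/k)$ uniformly in $s \in \{1, \dots, k\}$ by the choice of $\lambda_k$. The within-episode position is encoded, up to an $O(1)$ factor, by the trace component along $y^\leftrightarrow$. A short case analysis over pairs of on-trajectory histories shows that the ratio of value difference to Euclidean trace distance is uniformly $O(k)$, the binding case being equal position with opposite reveal. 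Kirszbraun's theorem then extends the restriction of $v$ to on-trajectory traces into an $O(k)$-Lipschitz function on all of $\mathcal{Z}_{\lambda_k}$, placing $v$ in $\mathcal{F}_{\lambda_k, O(k)}$, and the second bound of \cref{lem:H-trace} yields $H_\epsilon(\mathcal{F}_{\lambda_k, O(k)}) \in \mathcal{O}(k^{|\mathcal{Y}|-1})$. The main obstacle is precisely this Lipschitz-ratio verification: under the infinite-trajectory convention the trace carries pre-episode contributions of the form $\lambda_k^s z_{\text{pre}}$ that do not decay as $k\to\infty$, so one must show that pre-episode variation leaves $v$ unchanged (contributing zero to the numerator) while the in-episode projections above supply the required $\Omega(1/k)$ lower bound on the trace separation for every pair of on-trajectory histories with distinct values.
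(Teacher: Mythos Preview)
Your plan is essentially the paper's proof: a T-maze family, the window lower bound via $m\geq k$ and \cref{lem:H-window}, the choice $\lambda=1-1/k$, an $O(k)$-Lipschitz verification on on-trajectory traces followed by Kirszbraun extension, and the second bound of \cref{lem:H-trace} to conclude $\mathcal O(k^{|\mathcal Y|-1})$. Two execution differences are worth knowing. First, the paper's T-maze uses $|\mathcal Y|=5$ with an additional uniformly random observation $\mathtt{x}/\mathtt{y}$ at the corridor end; this forces the value to be identically $0$ at every non-terminal history and $\pm1$ only at the terminal step, so the Lipschitz case analysis reduces to pairs involving the terminal step and the distances are computed directly rather than through projections. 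Your simpler environment gives values that vary along the corridor, which adds cases but should still yield the $O(k)$ bound. Second, the paper simply works with $0$-padded finite histories (the episode begins with an empty past), which dissolves the obstacle you flag: there is no pre-episode trace at all. Your concern about pre-episode contamination is real in the genuinely infinite-trajectory setting---earlier reveals at spacing $\Theta(k)$ contribute a total projection of order $(1-\lambda)\lambda^k/(1-\lambda^k)\asymp 1/k$, comparable to the current reveal---so the clean projection argument does not survive unmodified; adopting the paper's $0$-padding is the cleanest fix.
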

\begin{proof}\vspace{-0.5em}
    The \emph{T-maze} environment with corridor length $k$ (see \cref{fig:tmaze}) provides such a sequence.
    As the first tile has to be remembered until the end of the corridor, the window size must be at least $m = k$, which yields the first line (using \cref{lem:H-window}).
    Setting $\lambda = \frac{k - 1}{k}$, we show that a Lipschitz constant of $L = \sqrt{2}\mathrm{e}k$ suffices to solve this task with zero return error.
    The second line then follows from \cref{lem:H-trace}.
    The complete proof is deferred to \cref{proof:thm:tmaze}.\vspace{-0.5em}
\end{proof}

In the T-maze example, most histories are not relevant, so their value estimates do not contribute to the return error.
This makes it possible for a large $\lambda$ to reduce the necessary Lipschitz constant.
Another scenario where a large $\lambda$ can be effective is a highly \emph{stochastic} environment.
Consider the following simple HMM with state space $\mathcal X = \{0, 1\}$ and one-hot observation space $\mathcal Y = \{\mathtt{0}, \mathtt{1}\}$.
The transition and emission probabilities are given in the diagram below.

\begin{center}
    \begin{tikzpicture}
        \node[draw, circle, inner sep=5] (x0) at (-1, 0) {$0$};
        \node[draw, circle, inner sep=5] (x1) at (1, 0) {$1$};
        \draw[<->] (x0) -- node[pos=0.5, anchor=south] {$p$} (x1);
        \draw[->] (x0) edge [loop left] node {$1 - p$} (x0);
        \draw[->] (x1) edge [loop right] node {$1 - p$} (x1);
        \node (y00) at (-1.5, -1.5) {$\mathtt{0}$};
        \draw[->, densely dashed] (x0) -- node[pos=0.4, anchor=east] {$1 - q$} (y00);
        \node (y01) at (-0.5, -1.5) {$\mathtt{1}$};
        \draw[->, densely dashed] (x0) -- node[pos=0.4, anchor=west] {$q$} (y01);
        \node (y10) at (0.5, -1.5) {$\mathtt{0}$};
        \draw[->, densely dashed] (x1) -- node[pos=0.4, anchor=east] {$q$} (y10);
        \node (y11) at (1.5, -1.5) {$\mathtt{1}$};
        \draw[->, densely dashed] (x1) -- node[pos=0.4, anchor=west] {$1 - q$} (y11);
    \end{tikzpicture}
\end{center}

\begin{figure}
    \centering
    \includegraphics{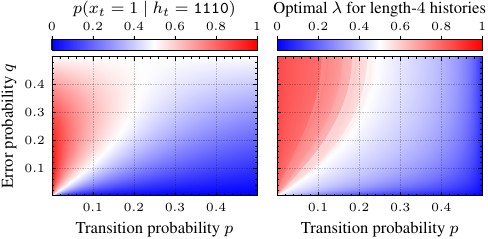}
    \vspace{-0.5cm}
    \caption{Investigating a simple 2-state HMM.
    (Left) The most recent observation is $\mathtt{0}$, but depending on the parameters $p$ and $q$ of the HMM, the predicted state may be either $0$ or $1$.
    (Right) The choice of $\lambda$ depends on the properties of the HMM.}
    \label{fig:pq}
\end{figure}

The probability of transitioning is $p$, and the probability of an `error' in the emission (i.e., $y_t \neq x_t$) is $q$.
These parameters influence how much each observation can be trusted.
For example, if $q$ is large, then relying only on the most recent observation may be misleading.
Instead, a more reliable estimate of the state can be obtained by accumulating several observations.
On the other hand, if $p$ is large, then one should not rely too much on older observations, as these are likely outdated.
In the left of \cref{fig:pq}, we show how the state estimate changes based on these parameters.

Given a reward function $r$, we would like to approximate the value function $v(h)$ with a Lipschitz function $f \in \mathcal F_{\lambda, L}$.
The optimal choice for $\lambda$ may be defined as
\begin{equation*}
    \lambda^\star \doteq \argmin_{\lambda \in [0, 1)}\,\inf\,\{L \geq 0 \mid\mathcal R_{\mathcal E}(\mathcal F_{\lambda, L}) = 0\}\text,
\end{equation*}
where, for this simple environment, the choice of $r$ does not influence the result.
In the right of \cref{fig:pq}, we numerically approximate $\lambda^\star$ while restricting the computation of the return error to only consider length-$4$ histories.
We see that both plots look very similar: wherever past observations are highly informative about the the present state (the red region in the left plot), we should prefer $\lambda \geq \frac{1}{2}$.
This leads to the memory trace effectively computing an average of recent observations, which is exactly the ``accumulating'' behavior that is needed when $q$ is large and $p$ is small.

\section{Experiments}
We now evaluate the effectiveness of memory traces as a practical alternative to windows in online reinforcement learning.
Our first experiment considers the setting of online policy evaluation by temporal difference learning with linear function approximation.
In our second experiment, we test the potential of memory traces for deep reinforcement learning.

\begin{figure*}[t]
    \begin{minipage}[b]{0.48\textwidth}
        \centering
        \includegraphics[width=\linewidth]{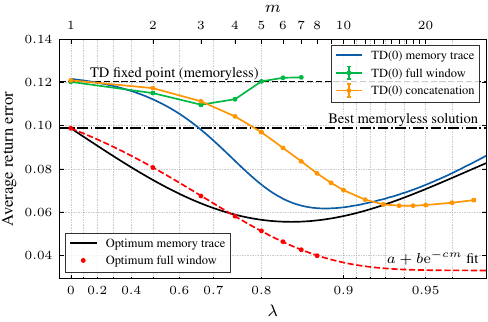}
        \vspace{-0.5cm}
        \caption{Return errors achieved with memory traces and windows in Sutton's noisy random walk environment. 
        The experiments were repeated with $100$ different random seeds and the plot contains (invisible) $95\%$ confidence intervals for the average return error.}
        \label{fig:sutton}
    \end{minipage}
    \hfill
    \begin{minipage}[b]{0.48\textwidth}
        \centering
        \includegraphics[width=\linewidth]{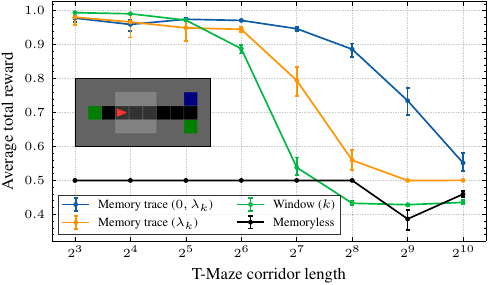}
        \vspace{-0.5cm}
        \vspace{-0.1mm}
        \caption{Average success rate of PPO in the T-maze.
        If a long memory is required, frame stacking is not a viable solution, and is outperformed by memory traces.
        The experiments were repeated with $50$ random seeds and we show $95\%$ confidence intervals.\footnotemark}
        \label{fig:ppo}
    \end{minipage}\vspace{-0.2cm}
\end{figure*}

\paragraph{Temporal difference (TD) learning.}
The environment that forms the basis for our first experiment is a modified version of \emph{Sutton's random walk} \citep[Example 9.1]{sutton2018reinforcement}.
There are $1001$ states arranged in a line with state $0$ at the left and state $1000$ at the right.
The agent starts in the center (state $x = 500$), and in each step is randomly transported to one of the $100$ states to its left or right.
Should the agent be close to the edge (i.e., $x < 100$ or $x > 900$), then the excess probability mass is moved to state $500$, so that going over the edge transports the agent back to the center.
Additionally, going over the edge on the right gives a reward of $+1$, while going over the edge on the left gives a reward of $-1$.
All other transitions give a reward of $0$.
This environment is partially observable: even though there are $1001$ states, there are only $|\mathcal Y| = 11$ different observations.
If the agent is in state $x$, then corresponding ``bracket'' is $\lfloor \frac{11}{1001}x \rfloor$.
With probability $0.5$, the agent observes this bracket, otherwise the agent receives a random observation between $0$ and $10$.\looseness=-1

In \cref{fig:sutton}, we plot the return errors achieved by TD learning when using either memory traces with linear function approximation or a length-$m$ window (``full window'').
For comparison, we also show the best possible return error achievable with either type of memory.
It can be seen that the trace approach significantly outperforms the window.
While the window can perform well in theory, in practice, the combinatorial complexity due to the number of possible length-$m$ windows quickly becomes prohibitive.
The learned weight vector has $|\mathcal Y|$ parameters in the memory trace approach, but $|\mathcal Y|^m$ parameters in the window approach.
We also show the performance of an alternative window memory in which the $m$ most recent observations are concatenated into a single vector, so that the weight vector has $m|\mathcal Y|$ parameters.
Despite this increased flexibility, we see that it does not improve performance over the exponential weighting of memory traces.

\paragraph{Proximal policy optimization (PPO).}
We now test the capabilities of memory traces for control.
The T-maze has already been introduced (see \cref{fig:tmaze}).
We construct a Minigrid \citep{chevalier2024minigrid} version of this environment, shown inset in \cref{fig:ppo} (with corridor length $k = 8$), and evaluate the performance of PPO \citep{schulman2017proximal}.
The agent starts in a colored tile on the left, and, at the end of the corridor, has to decide whether to go up or down.
If the color of the terminal tile matches the starting tile, a reward of $+1$ is given.
All other transitions result in a reward of $0$.

In \cref{fig:ppo}, we compare the average success rate per learning episode of PPO when using either memory.
In this context, the window memory approach is also called \emph{frame stacking}.
To make the maze solvable, we use a window length that is equal to the corridor length $k$.
For the memory trace approach, we used two parallel traces with different values of $\lambda$ as input.
The first is $\lambda = 0$, corresponding to the current observation, and the second is $\lambda = \lambda_k \doteq \frac{k - 1}{k}$, as derived in the proof of \cref{thm:tmaze}.
Also shown are the performances of PPO when using either of these values individually (in this case, $\lambda = 0$ is a memoryless policy).
We can see that the memory trace can function as a good alternative to frame stacking in tasks where a long memory is required.
While frame stacking avoids the combinatorial explosion experienced in the tabular case, the neural networks still become excessively large for very long windows, and learning becomes difficult.
Additional details regarding implementation and hyperparameters for both PPO and TD learning can be found in \cref{app:hyper}.
\footnotetext{An average success rate of less than $0.5$ is a result of premature termination with a reward of $0$ if the maximum episode length of $5(k + 2)^2$ steps is exceeded (where $k$ is the corridor length).}

\section{Discussion \& conclusion}\label{sec:discussion}
We introduced the \emph{memory trace}, a new type of memory for reinforcement learning.
Our analysis compares this feature to the common \emph{window} memory, where we find that the two concepts are closely linked and even equivalent in some cases.
However, we also demonstrate that there are environments that can be efficiently solved with memory traces but not with windows, by characterizing the complexity of learning Lipschitz functions.
In contrast to this, we show that the converse statement is not true; there is no environment that can be efficiently solved with windows, but not with memory traces.
Our focus on Lipschitz functions is motivated by the geometry of the space of memory traces, and is also of practical relevance, as neural networks have been shown to learn functions with lower Lipschitz constants more easily  \citep{rahaman2019spectral}.
We further demonstrate in experiments that memory traces provide an effective alternative to windows.\looseness=-1

Theoretical efforts in the community have mainly focused on analyzing window memory, despite its poor scaling properties.
Unfortunately, the resulting guarantees are based on observability assumptions that are often violated (e.g., by overcomplete environments such as Sutton's noisy random walk).
Memory traces, in contrast, provide a practical feature that is mathematically tractable, and our analysis is a first step towards understanding such memory features.
An important extension that we leave to future work is a characterization of the solvability of environments with memory traces and Lipschitz continuous functions in terms of explicit properties of the transition and emission probabilities.

\section*{Acknowledgments}
We thank the International Max Planck Research School for Intelligent Systems (IMPRS-IS) for their support.
M.\ Muehlebach is funded by the German Research Foundation (DFG) under the project 456587626 of the Emmy Noether Programme.
C.\ Vernade is funded by the German Research Foundation (DFG) under both the project 468806714 of the Emmy Noether Programme and under Germany's Excellence Strategy -- EXC number 2064/1 -- Project number 390727645.

\section*{Impact statement}
This paper presents work whose goal is to advance the field of machine learning. There are many potential societal consequences of our work, none which we feel must be specifically highlighted here.

\bibliography{bib}
\bibliographystyle{icml2025}

\appendix
\begin{figure*}
    \includegraphics[width=\textwidth]{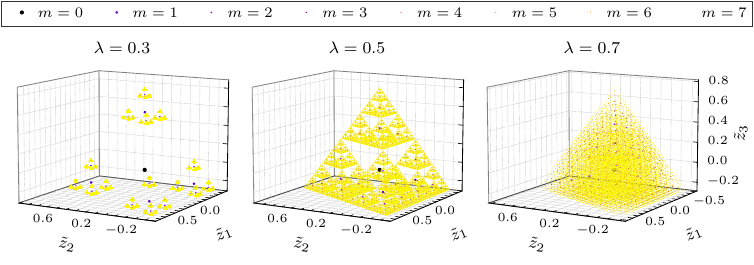}
    \vspace{-0.5cm}
    \caption{A visualization of trace space, where $\mathcal Y$ is one-hot with $|\mathcal Y| = 4$.
    To visualize the four-dimensional space, we perform an orthogonal projection onto the three-dimensional space described by \cref{eq:subspace}.}
    \label{fig:sierpinski-4d}
\end{figure*}
\section{Further results on injectivity}\label{app:injectivity}
The injectivity result of \cref{thm:injectivity} holds only if $\lambda$ is rational and histories are finite.
We now justify these constraints with two counterexamples.

\begin{example}[Irrational $\lambda$]\label{ex:irrational}
    Consider the observation space consisting of $y^1 = (1, 0)$ and $y^2 = (0, 1)$.
    The history $h = (y^1, y^2, y^2)$ leads to the memory trace
    \begin{equation*}
        z_\lambda(h) = y^1 + \lambda y^2 + \lambda^2 y^2 = (1, \lambda + \lambda^2).
    \end{equation*}
    If $\lambda = \phi - 1$, where $\phi = \frac{\sqrt{5} + 1}{2}$ is the golden ratio, then $z_\lambda(h') = (1, 1)$.
    By symmetry, the same memory trace will thus be obtained for the history $h' = (y^2, y^1, y^1)$.
\end{example}

\begin{example}[Infinite history]
    Consider the same observation space as in \cref{ex:irrational}.
    Let $h = (y^1, y^2, y^2, \dots)$ and $h' = (y^2, y^1, y^1, \dots)$ be two infinite histories.
    Then, with $\lambda = \frac{1}{2}$,
    \begin{equation*}
        z_\lambda(h) = \biggl(1, \sum_{k=1}^\infty \frac{1}{2^k}\biggr) = (1, 1).
    \end{equation*}
    Thus, the traces are equal by symmetry.
\end{example}

Note that the above examples only work because $\lambda \geq \frac{1}{2}$.
For $\lambda < \frac{1}{2}$, injectivity is guaranteed by \cref{lem:separation}.
We now extend this result to linearly dependent observation spaces.

\begin{theorem}\label{thm:injectivity-dependent}
    For all finite sets $\mathcal Y \subset \mathbb Q^D$, there exists a set $\Lambda \subset \mathbb Q$ that is dense in $(0, 1)$ with the property that if $\lambda \in \Lambda$, then $z_\lambda$ is injective for finite histories.
\end{theorem}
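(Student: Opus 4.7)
The plan is to mimic the rational root theorem argument from \cref{thm:injectivity}, but applied coordinate-wise after clearing denominators, using the fact that $\mathcal Y$ being finite gives a uniform bound on coefficient magnitudes. First I would zero-pad both histories as in the proof of \cref{thm:injectivity} and compute
\begin{equation*}
    \frac{z_\lambda(h) - z_\lambda(\bar h)}{1-\lambda} = \sum_{k = 0}^\infty \lambda^k(y_{-k} - \bar y_{-k}) \doteq P(\lambda) \in \mathbb Q^D[\lambda].
\end{equation*}
Since $\lambda \in (0,1)$, injectivity fails at $\lambda$ precisely when the polynomial vector $P$ vanishes there. For distinct finite histories $h \neq \bar h$, the coefficient $y_{-k} - \bar y_{-k}$ is nonzero for at least one $k$, so $P$ is not identically zero, and therefore at least one coordinate polynomial $P_j \in \mathbb Q[\lambda]$ is not identically zero.

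Next, I would fix a common denominator: let $N \in \mathbb N$ be such that $N\mathcal Y \subset \mathbb Z^D$, and set
\begin{equation*}
    M \doteq 2N \max_{y \in \mathcal Y,\, j \in \{1, \dots, D\}} |y_j|.
\end{equation*}
Multiplying $P_j$ by $N$ yields an integer polynomial $NP_j \in \mathbb Z[\lambda]$, and every coefficient of $NP_j$ is a difference of two entries of $\mathbb Z$-valued scaled observations (with $0$ allowed because of padding), hence has absolute value at most $M$. In particular, the leading coefficient $a$ of $NP_j$ (which exists since $P_j \not\equiv 0$) satisfies $|a| \leq M$.

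Now I would apply the rational root theorem: any rational root $p/q$ of $NP_j$ in lowest terms must have $q \mid a$, hence $|q| \leq M$. Crucially, $M$ depends only on $\mathcal Y$, not on the particular pair $(h, \bar h)$. Therefore, defining
\begin{equation*}
    \Lambda \doteq \{p/q \in (0,1) \cap \mathbb Q \mid \gcd(p,q) = 1,\, q > M\},
\end{equation*}
any $\lambda \in \Lambda$ cannot be a root of $NP_j$ for any pair of distinct finite histories, so $z_\lambda(h) \neq z_\lambda(\bar h)$. The set $\Lambda$ is dense in $(0,1)$ because every nonempty open subinterval of $(0,1)$ contains rationals with arbitrarily large denominator.

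The main obstacle is ensuring that the bound on denominators is \emph{uniform} over the (infinite) collection of pairs of distinct finite histories. This is exactly what the finiteness of $\mathcal Y$ buys us: differences $y_{-k} - \bar y_{-k}$ live in a fixed finite subset of $\mathbb Q^D$, so the leading coefficients of the resulting integer polynomials are drawn from a fixed bounded set, making the rational root theorem yield a single threshold $M$ that works simultaneously for all pairs. Everything else is a straightforward adaptation of the proof of \cref{thm:injectivity}.
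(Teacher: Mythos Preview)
Your argument is correct and rests on the same core mechanism as the paper's proof: pass to a nonzero coordinate polynomial, clear denominators to obtain an integer polynomial, and invoke the rational root theorem, exploiting that the finiteness of $\mathcal Y$ makes the bound on leading coefficients uniform over all pairs of histories. Where you diverge is in the \emph{definition} of $\Lambda$. You take the blunt route of excluding all reduced fractions with denominator at most a fixed threshold $M$, which makes density immediate. The paper instead defines $\Lambda$ via an explicit divisibility condition on $q$ against the finitely many integers $(\tfrac{p_{ij}}{q_{ij}}-\tfrac{p_{i'j}}{q_{i'j}})\prod_{k,l} q_{kl}$, and then has to work harder to verify density (constructing rationals with prime-power denominators). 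Your $\Lambda$ is strictly smaller but entirely sufficient for the theorem as stated; the paper's finer description pays off only in the follow-up corollary, where it lets one compute that for $0$--$1$ encodings one recovers exactly $\Lambda=\mathbb Q\smallsetminus\mathbb Z$. If you do not need that sharper identification, your version is cleaner.
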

\begin{proof}\vspace{-1em}
    We follow a similar argument as in the proof of \cref{thm:injectivity}.
    Let $h$ and $h'$ be two distinct finite histories that are $0$-padded.
    We write $z \doteq z_\lambda(h)$ and $z' \doteq z_\lambda(h')$.
    Let $e_1, \dots, e_D$ denote the standard basis of $\mathbb R^D$.
    As $\mathcal Y_0 \subset \mathbb Q^D$, each $y^i \in \mathcal Y_0$ can be written as
    \begin{equation*}
        y^i = \sum_{j=1}^D \frac{p_{ij}}{q_{ij}} e_j\text,
    \end{equation*}
    with $p_{ij}$ and $q_{ij}$ coprime integers.
    \begin{equation*}
        \frac{z - z'}{1 - \lambda} = \sum_{i=1}^{|\mathcal Y_0|} \alpha_i(\lambda) y^{i} = \sum_{j=1}^{D} \biggl\{\underbrace{\sum_{i=1}^{|\mathcal Y_0|} \frac{p_{ij}}{q_{ij}} \alpha_i(\lambda)}_{\beta_j(\lambda)}\biggr\} e_j\text,
    \end{equation*}
    where $\alpha_i$ is defined in \cref{eq:alpha}.
    By linear independence of the standard basis, we see that $z = z'$ only if $\beta_j(\lambda) = 0$ for all $j \in [D]$.
    By \cref{eq:alpha}, we can expand $\beta_j$ as
    \begin{equation*}
        \beta_j(\lambda) = \sum_{k=0}^\infty \lambda^k \sum_{i=1}^{|\mathcal Y_0|} \frac{p_{ij}}{q_{ij}} \bigl([y_{-k} = y^i] - [y'_{-k} = y^i]\bigr).
    \end{equation*}
    Let $i_k$ and $i'_k$ be the indices (with respect to $\mathcal Y_0$) of the observations $k$ steps ago, for $h$ and $h'$, respectively, such that $y_{-k} = y^{i_k}$ and $y'_{-k} = y^{i'_k}$.
    We can then write $\beta_j$ as 
    \begin{equation*}
        \beta_j(\lambda) = \sum_{k=0}^\infty \lambda^k \biggl(\frac{p_{i_kj}}{q_{i_kj}} - \frac{p_{i'_kj}}{q_{i'_kj}}\biggr).
    \end{equation*}
    As the histories $h$ and $h'$ are distinct, there must be a $k_0$ such that the corresponding observations are distinct, $y_{-k_0} \neq y'_{-k_0}$.
    The coefficient of the $\lambda^{k_0}$ term in $\beta_j$ must be nonzero for at least one $j \in [D]$, as we would otherwise have
    \begin{equation*}
        y_{-k_0} - y'_{-k_0} = \sum_{j=1}^D \biggl(\frac{p_{i_{k_0}j}}{q_{i_{k_0}j}} - \frac{p_{i'_{k_0}j}}{q_{i'_{k_0}j}}\biggr)e_j = 0\text,
    \end{equation*}
    which is a contradiction.
    Thus, we have shown that there exists a $j_0 \in [D]$ such that $\beta_{j_0}$ has at least one nonzero coefficient.
    In the following we introduce the set $\Lambda$, which we show to be dense in the real numbers, and finally we complete the proof by showing that $\lambda \in \Lambda$ implies $\beta_{j_0}(\lambda) \neq 0$.\looseness=-1

    The set $\Lambda \subset \mathbb Q$ is defined as
    \begin{multline*}
        \Lambda \doteq \biggl\{\frac{p}{q} \in \mathbb Q \bigm| \forall i, i' \in [|\mathcal Y_0|], j \in [D]:\\\frac{p_{ij}}{q_{ij}} \neq \frac{p_{i'j}}{q_{i'j}} \Rightarrow q \mathbin{\backslash\hspace{-0.5em}\shortminus} \biggl(\frac{p_{ij}}{q_{ij}} - \frac{p_{i'j}}{q_{i'j}}\biggr) \prod_{k = 1}^{|\mathcal Y_0|}\prod_{l = 1}^D q_{kl} \biggr\}\text,
    \end{multline*}
    where $p$ and $q$ are coprime, and $m \mathbin{\backslash\hspace{-0.5em}\shortminus} n$ means that $m$ does not divide $n$.
    We first show that $\Lambda$ is dense in $\mathbb R$.
    Let $a < b$ be two real numbers.
    As the rational numbers are dense in $\mathbb R$, there must exist a number $\frac{\tilde p}{\tilde q} \in \mathbb Q$, with $\tilde p$ and $\tilde q$ coprime and $\tilde q > 1$, such that $a < \frac{\tilde p}{\tilde q} < b$.
    (If $\tilde q = 1$, we can create a new rational number in $(\tilde p, b) \subset (a, b)$ instead.)
    Let $n > 1$ be a natural number such that
    \begin{equation*}
        \tilde q^n > \max\biggl\{\hat c, \frac{1}{b - \tilde p/\tilde q}\biggr\}\text,
    \end{equation*}
    where
    \begin{equation*}
        \hat c \doteq \max_{\substack{i, i' \in [|\mathcal Y_0|]\\ j \in [D]}} \biggl\{\biggl(\frac{p_{ij}}{q_{ij}} - \frac{p_{i'j}}{q_{i'j}}\biggr) \prod_{k = 1}^{|\mathcal Y_0|}\prod_{l = 1}^D q_{kl} \biggr\}.
    \end{equation*}
    We now define $p \doteq \tilde p\tilde q^{n-1} + 1$ and $q \doteq \tilde q^n$.
    Then,
    \begin{equation*}
        \frac{p}{q}  = \frac{\tilde p}{\tilde q} + \frac{1}{\tilde q^n} \in (a, b).
    \end{equation*}
    If $p$ and $q$ are coprime, then $\frac{p}{q} \in \Lambda$, as $q$ is too large to divide any of the integers that the definition of $\Lambda$ prohibits it to divide.
    Let $e$ be a prime factor of $q = \tilde q^n$.
    Then, $e$ is a prime factor of $\tilde q$, and hence also of $p - 1 = \tilde p\tilde q^{n-1}$.
    But then, $e$ cannot be a prime factor of $p$.
    Thus, $p$ and $q$ are coprime, and so $\frac{p}{q} \in \Lambda \cap (a, b)$.
    As $a$ and $b$ were arbitrary real numbers, we have shown that $\Lambda$ is dense in $\mathbb R$.

    Finally, we show that no $\lambda \in \Lambda$ is a root of $\beta_{j_0}$.
    Assume, for the sake of contradiction, that there exists a $\lambda = \frac{p}{q} \in \Lambda$, with $p$ and $q$ coprime, such that $\beta_{j_0}(\lambda) = 0$:
    \begin{alignat*}{3}
        &&&&\sum_{k=0}^\infty \lambda^k \biggl(\frac{p_{i_kj_0}}{q_{i_kj_0}} - \frac{p_{i'_kj_0}}{q_{i'_kj_0}}\biggr)  &= 0\\
        &&\iff\quad &&\sum_{k=0}^\infty \lambda^k \underbrace{\biggl(\frac{p_{i_kj_0}}{q_{i_kj_0}} - \frac{p_{i'_kj_0}}{q_{i'_kj_0}}\biggr) \prod_{i = 1}^{|\mathcal Y_0|}\prod_{j = 1}^D q_{ij}}_{c_k \in \mathbb Z} &= 0.
    \end{alignat*}
    Then, by the rational root theorem, $q$ divides the highest-order coefficient $c_{\hat k}$, which we now know is nonzero.
    However, the definition of $\Lambda$ prohibits $q$ from dividing an integer of this form.
    Thus, we know that no $\lambda \in \Lambda$ is a root of $\beta_{j_0}(\lambda)$, and we can conclude that $z \neq z'$.\vspace{-0.5em}
\end{proof}

How is the set $\Lambda$ to be interpreted, and why does $\Lambda = \mathbb Q \smallsetminus \mathbb Z$ does not always work?
Consider the observation space $\mathcal Y = \{0, 1, 2\} \subset \mathbb Q$, which is not linearly independent.
If $\lambda = \frac{1}{2}$, then the histories $h = (1, 0)$ and $h' = (0, 2)$ will both produce the memory trace $z = 1$.
To build the set $\Lambda$, we first rewrite each observation as $y^i = \frac{p_i}{q_i}$.
As all observations are integers, $q_i = 1$ for all three $y^i \in \mathcal Y$.
Thus, looking at the definition of $\Lambda$, we have for a $\frac{p}{q} \in \mathbb Q$, with $p$ and $q$ coprime, that $\frac{p}{q} \in \Lambda$ if and only if,\vspace{-1em}
\begin{multline*}
    \forall i, i' \in [|\mathcal Y_0|]: \biggl\{\frac{p_{i}}{q_{i}} \neq \frac{p_{i'}}{q_{i'}} \Rightarrow q \mathbin{\backslash\hspace{-0.5em}\shortminus} \biggl(\frac{p_{i}}{q_{i}} - \frac{p_{i'}}{q_{i'}}\biggr) \prod_{k = 1}^{|\mathcal Y_0|} q_{k}\biggr\}\\
    \begin{aligned}
        \iff\;&\forall i, i' \in [|\mathcal Y_0|]: \bigl\{p_{i} \neq p_{i'} \Rightarrow q \mathbin{\backslash\hspace{-0.5em}\shortminus} (p_{i} - p_{i'})\bigr\}\\
        \iff\;&\forall d \in \{\pm 1, \pm 2\}: q \mathbin{\backslash\hspace{-0.5em}\shortminus} d\\
        \iff\;&q \mathbin{\backslash\hspace{-0.5em}\shortminus} 2
    \end{aligned}
\end{multline*}
We thus see that $\Lambda = \mathbb Q \smallsetminus \frac{1}{2}\mathbb Z$.
Thus, for $0 < \lambda < 1$, our choice of $\lambda = \frac{1}{2}$ was the only point where $\Lambda$ differs from $\mathbb Q \smallsetminus \mathbb Z$.

A relevant special case of this result is concerned with $0$-$1$-encodings of the observations.
The next result shows that, in this case, the set $\Lambda$ is identical to the one in \cref{thm:injectivity}.
\begin{corollary}
    If $\mathcal Y \subset \{0, 1\}^D$ and $\lambda \in \mathbb Q \smallsetminus \mathbb Z$, then $z_\lambda$ is injective for finite histories.
\end{corollary}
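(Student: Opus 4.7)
The plan is to obtain this as a direct specialization of \cref{thm:injectivity-dependent}. That theorem already guarantees a dense set $\Lambda \subset \mathbb Q$ of ``injective'' forgetting factors; all I need to do is compute $\Lambda$ explicitly under the extra assumption $\mathcal Y \subset \{0,1\}^D$ and verify that it equals $\mathbb Q \smallsetminus \mathbb Z$.

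First I would write each observation $y^i \in \mathcal Y$ in the form $y^i = \sum_{j=1}^{D} \frac{p_{ij}}{q_{ij}} e_j$ used in the proof of \cref{thm:injectivity-dependent}. Since the coordinates of $y^i$ lie in $\{0,1\} \subset \mathbb Z$, one can simply take $p_{ij} \in \{0,1\}$ and $q_{ij} = 1$ for all $i$ and $j$. In particular, the product $\prod_{k=1}^{|\mathcal Y_0|}\prod_{l=1}^{D} q_{kl}$ that appears in the definition of $\Lambda$ collapses to $1$, and the differences $\frac{p_{ij}}{q_{ij}} - \frac{p_{i'j}}{q_{i'j}}$ are just $p_{ij} - p_{i'j} \in \{-1,0,+1\}$.

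Next I would substitute these values into the defining condition of $\Lambda$. A coprime rational $\frac{p}{q}$ lies in $\Lambda$ if and only if, whenever $p_{ij} \neq p_{i'j}$ for some indices, $q$ fails to divide the integer $(p_{ij} - p_{i'j}) \cdot 1 \in \{\pm 1\}$. The only divisors of $\pm 1$ are $\pm 1$ themselves, so this condition is equivalent to $q \notin \{-1, +1\}$, that is, $\frac{p}{q}$ is not an integer. Hence $\Lambda = \mathbb Q \smallsetminus \mathbb Z$ in this setting.

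Finally I would invoke \cref{thm:injectivity-dependent}: for every $\lambda \in \Lambda = \mathbb Q \smallsetminus \mathbb Z$, the map $z_\lambda$ is injective on finite histories, which is exactly the claim. There is no real obstacle here; the only thing to be careful about is tracking the $q_{ij}=1$ simplification through the definition of $\Lambda$ so that the condition reduces cleanly to ``$q$ does not divide $\pm 1$.'' No separate handling of the linearly dependent case is needed, since \cref{thm:injectivity-dependent} was proven without any independence assumption on $\mathcal Y$.
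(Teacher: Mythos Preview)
Your proposal is correct and follows essentially the same argument as the paper: both specialize \cref{thm:injectivity-dependent} by taking $p_{ij}\in\{0,1\}$, $q_{ij}=1$, collapsing the product of denominators to $1$, and observing that the divisibility constraint reduces to $q \mathbin{\backslash\hspace{-0.5em}\shortminus} 1$, hence $\Lambda = \mathbb Q \smallsetminus \mathbb Z$. There is no meaningful difference between the two.
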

\begin{proof}\vspace{-1em}
    We will show that the set $\Lambda$ that is constructed in the proof of \cref{thm:injectivity-dependent} is exactly $\Lambda = \mathbb Q \smallsetminus \mathbb Z$ for this choice of $\mathcal Y$.
    All $y^i \in \mathcal Y_0$ can be written as
    \begin{equation*}
        y^i = \sum_{j=1}^D \frac{p_{ij}}{q_{ij}} e_j\text,
    \end{equation*}
    with $q_{i,j} = 1$ and $p_{i, j} \in \{0, 1\}$ for all $i \in [|\mathcal Y_0|]$ and $j \in D$.
    Thus, if $\lambda = \frac{p}{q} \in \mathbb Q$, with $p$ and $q$ coprime, then $\lambda \in \Lambda$ if and only if,
    \begin{align*}
        &\begin{multlined}%
            \forall i, i' \in [|\mathcal Y_0|], j \in [D]: \\[-0.5em]
            \biggl\{\frac{p_{ij}}{q_{ij}} \neq \frac{p_{i'j}}{q_{i'j}} \Rightarrow q \mathbin{\backslash\hspace{-0.5em}\shortminus} \biggl(\frac{p_{ij}}{q_{ij}} - \frac{p_{i'j}}{q_{i'j}}\biggr) \prod_{k = 1}^{|\mathcal Y_0|}\prod_{l = 1}^D q_{k,l}\biggr\}
        \end{multlined}\\
        &\begin{multlined}%
            \iff\;\forall i, i' \in [|\mathcal Y_0|], j \in [D]:\\
            \hspace{6em}\bigl\{p_{ij} \neq p_{i'j} \Rightarrow q \mathbin{\backslash\hspace{-0.5em}\shortminus} (p_{ij} - p_{i'j})\bigr\}
        \end{multlined}\\
        &\iff\;\forall d \in \{-1, 1\}: q \mathbin{\backslash\hspace{-0.5em}\shortminus} d\\
        &\iff\; q \mathbin{\backslash\hspace{-0.5em}\shortminus} 1.
    \end{align*}
    Thus $\Lambda = \mathbb Q \smallsetminus \mathbb Z$.\vspace{-0.5em}
\end{proof}



\section{Proofs}
In this section we present all proofs that are omitted in the main text.

\paragraph{\cref{lem:concentration} {\mdseries(Concentration)}.}\phantomsection\label{proof:lem:concentration}
\emph{\hspace{-1em}
    Let $h$ and $\bar h$ be two histories of one-hot observations such that $\operatorname{win}_m(h) = \operatorname{win}_m(\bar h)$ for some $m$. Then, the corresponding traces satisfy
    \begin{equation*}
        \norm{z_\lambda(h) - z_\lambda(\bar h)} \leq \sqrt 2 \lambda^{m}.
    \end{equation*}
}
\begin{proof}\vspace{-0.5em}\belowdisplayskip=-\baselineskip
    Without loss of generality we can assume the histories are infinite (by applying $0$-padding).
    By expanding the distance, we get
    \begin{multline*}
        \quad\norm{z_\lambda(h) - z_\lambda(\bar h)}\\
        \begin{aligned}
        &= (1 - \lambda) \big\lVert{\sum_{\mathclap{k=m}}^\infty \lambda^k (y_{-k} - \bar y_{-k})}\big\rVert\\
        &\leq (1 - \lambda) \lambda^{m}\sum_{k=0}^\infty \lambda^{k} \norm{y_{-k-m} - \bar y_{-k-m}}\quad\\
        &\leq \sqrt 2 \lambda^{m}.
    \end{aligned}
    \end{multline*}\vspace{-0.5em}
\end{proof}

\paragraph{\cref{lem:separation} {\mdseries(Separation)}.}\phantomsection\label{proof:lem:separation}
\emph{\hspace{-1em}
    Let $h$ and $\bar h$ be two histories of one-hot observations such that $\operatorname{win}_m(h) \neq \operatorname{win}_m(\bar h)$ for some $m$. Then, if $\lambda \leq \frac{1}{2}$, the corresponding traces satisfy
    \begin{equation*}
        \norm{z_\lambda(h) - z_\lambda(\bar h)} \geq \sqrt 2 (1 - 2\lambda)\lambda^{m-1}.
    \end{equation*}
}
\begin{proof}\vspace{-0.5em}
    We write $z_\lambda^m \doteq z_\lambda \circ \operatorname{win}_m$.
    Without loss of generality we can assume the histories are infinite (by applying $0$-padding).
    By making use of the reverse triangle inequality, we get
    \begin{multline*}
        \norm{z_\lambda(h) - z_\lambda(\bar h)}\\
        \begin{aligned}
            &= (1 - \lambda) \big\lVert{\sum_{k=0}^\infty \lambda^k (y_{-k} - \bar y_{-k})}\big\rVert\\
            &\begin{multlined}
                =\big\lVert z_\lambda^m(h) - z_\lambda^m(\bar h)\\\qquad\qquad+ (1 - \lambda)\sum_{\mathclap{k=m}}^\infty\lambda^k(y_{-k} - \bar y_{-k})\big\rVert
            \end{multlined}\\
            &\begin{multlined}
                \geq \big| \norm{z_\lambda^m(h) - z_\lambda^m(\bar h)}\\\qquad\qquad- (1 - \lambda)\lVert\sum_{\mathclap{k=m}}^\infty\lambda^k(y_{-k} - \bar y_{-k})\rVert\big|.
            \end{multlined}
        \end{aligned}
    \end{multline*}
    We will show that the first term is bounded below by
    \begin{equation*}
        \norm{z_\lambda^m(h) - z_\lambda^m(\bar h)} \geq \sqrt 2 (1 - \lambda)\lambda^{m-1}\text,
    \end{equation*}
    whereas the second term can be upper-bounded using the triangle inequality and the fact that $\norm{y_{-k} - \bar y_{-k}} \leq \sqrt 2$:
    \begin{multline*}
        (1 - \lambda)\big\lVert\sum_{\mathclap{k=m}}^\infty\lambda^k(y_{-k} - \bar y_{-k})\big\rVert \\\leq \sqrt 2 \lambda^{m} \leq \sqrt 2 (1 - \lambda)\lambda^{m-1}\text,
    \end{multline*}
    where we used $\lambda \leq \frac{1}{2}$ in the last step.
    This allows us to drop the absolute value in our bound above and get
    \begin{align*}
        \norm{z_\lambda(h) - z_\lambda(\bar h)} &\geq \sqrt 2 (1 - \lambda)\lambda^{m-1} - \sqrt 2 \lambda^{m}\\
        &= \sqrt 2 (1 - 2\lambda)\lambda^{m-1}
    \end{align*}
    as desired.

    To prove the lower bound on the first term, we perform induction on $m$. For the base case $m = 1$ (a single observation), $\operatorname{win}_m(h) \neq \operatorname{win}_m(\bar h)$ implies that $y \neq \bar y$. We thus have $\norm{z_\lambda^1(h) - z_\lambda^1(\bar h)} = \sqrt 2(1 - \lambda)$, satisfying the desired inequality.

    We now suppose that the result holds for some $m \in \mathbb N$, and consider the case of two histories $h$ and $\bar h$ such that $\operatorname{win}_{m+1}(h) \neq \operatorname{win}_{m+1}(\bar h)$.
    Using the reverse triangle inequality, we have
    \begin{multline}
        \norm{z_\lambda^{m + 1}(h) - z_\lambda^{m + 1}(\bar h)}\\
        \begin{aligned}
            &= (1 - \lambda) \big\lVert{\sum_{k=0}^{m} \lambda^k (y_{-k} - \bar y_{-k})}\big\rVert\\
            &= \lVert (1 - \lambda)(y - \bar y) + \lambda\{z_\lambda^{m}(h_{-1}) - z_\lambda^{m}(\bar h_{-1})\}\rVert\\
            &\geq \big|(1 - \lambda)\norm{y - \bar y} - \lambda\norm{z_\lambda^{m}(h_{-1}) - z_\lambda^{m}(\bar h_{-1})}\big|\text,
        \end{aligned}\\[-1.1\baselineskip]\label{eq:dist-1}
    \end{multline}
    There are two cases to consider.
    First, suppose that $y = \bar y$, such that the first term in \cref{eq:dist-1} is $(1 - \lambda)\norm{y - \bar y} = 0$.
    In this case, we know that $\operatorname{win}_{m}(h_{-1}) \neq \operatorname{win}_{m}(\bar h_{-1})$, since otherwise $\operatorname{win}_{m+1}(h) = \operatorname{win}_{m+1}(\bar h)$.
    Thus, the induction hypothesis and \cref{eq:dist-1} guarantee that
    \begin{align*}
        \norm{z_\lambda^{m+1}(h) - z_\lambda^{m+1}(\bar h)} &\geq \lambda\norm{z_\lambda^{m}(h_{-1}) - z_\lambda^{m}(\bar h_{-1})} \\&\geq \sqrt 2 (1 - \lambda)\lambda^{m}.
    \end{align*}
    Now, suppose that $y \neq \bar y$, such that $\norm{y - \bar y} = \sqrt 2$.
    We can upper-bound the second term in \cref{eq:dist-1} by making use of the triangle inequality and the fact that $\lambda \leq (1 - \lambda)$:
    \begin{multline*}
        \lambda\norm{z_\lambda^{m}(h_{-1}) - z_\lambda^{m}(\bar h_{-1})}\\
        \begin{aligned}
            &= \lambda(1 - \lambda)\big\lVert{\sum_{k=0}^{m-1} \lambda^k (y_{-k-1} - \bar y_{-k-1})}\big\rVert\\
            &\leq \sqrt 2 (1 - \lambda^m)\lambda \leq \sqrt 2\lambda \leq \sqrt 2 (1 - \lambda).
        \end{aligned}
    \end{multline*}
    Thus, the first term of \cref{eq:dist-1} dominates the second term, and we can drop the absolute value:
    \begin{multline*}
        \norm{z_\lambda^{m+1}(h) - z_\lambda^{m+1}(\bar h)}\\
        \begin{aligned}
            &\geq (1 - \lambda)\norm{y - \bar y} - \lambda\norm{z_\lambda^{m}(h_{-1}) - z_\lambda^{m}(\bar h_{-1})}\\
            &\geq \sqrt 2(1 - \lambda) - \sqrt 2(1 - \lambda^m)\lambda\\
            &\geq \sqrt 2(1 - \lambda) - \sqrt 2(1 - \lambda^m)(1 - \lambda)\\
            &= \sqrt 2(1 - \lambda)\lambda^m\text,
        \end{aligned}
    \end{multline*}
    where we have again used that $\lambda \leq \frac{1}{2}$.\vspace{-0.5em}
\end{proof}

\paragraph{\cref{lem:minkowski}.}\phantomsection\label{proof:lem:minkowski}
\emph{\hspace{-1em}
    If $\mathcal Y$ is one-hot, then the Minkowski dimension of $\mathcal Z_\lambda$ is, for all $\lambda < \frac{1}{2}$,
    \begin{equation*}
        \dim(\mathcal Z_\lambda) = \frac{\log |\mathcal Y|}{\log(1 / \lambda)} \doteq d_\lambda.
    \end{equation*}
    For all $\lambda \in [0, 1)$, we have $\dim(\mathcal Z_\lambda) \leq \min\{|\mathcal Y| - 1, d_\lambda\}$.
}
\begin{proof}\vspace{-0.5em}
    The set $\mathcal Z_\lambda$ is a \emph{self-similar fractal}, which means that it is composed of several scaled and translated copies of itself.
    In particular,
    \begin{equation*}
        \mathcal Z_\lambda = \lambda \mathcal Z_\lambda + (1 - \lambda) \mathcal Y\text,
    \end{equation*}
    where the scalar multiplication is applied to every element of the set, and the addition is the \emph{Minkowski addition} which adds all pairs of points of the two sets.
    Thus, $\mathcal Z_\lambda$ consists of precisely $|\mathcal Y|$ copies of itself, each scaled by $\lambda$.
    If $\lambda < \frac{1}{2}$, then \cref{lem:separation} guarantees that these $|\mathcal Y|$ smaller copies do not overlap (this is also visualized in \cref{fig:sierpinski}).
    The results now all follow directly from known properties of the Minkowski dimension \citep[Section 1.15.1]{tao2010epsilon}.
    The upper bound of $|\mathcal Y| - 1$ follows from \cref{eq:subspace}.\vspace{-0.5em}
\end{proof}

\paragraph{\cref{thm:hoeffding} {\mdseries(Hoeffding bound)}.}\phantomsection\label{proof:thm:hoeffding}
\emph{\hspace{-1em}
    Given a dataset of $n$ trajectories from an environment $\mathcal E$, a function class $\mathcal F$, and some $\epsilon > 0$, let $\mathcal F^\epsilon$ be the smallest $\epsilon$-cover of $\mathcal F$ and $f_n \doteq \argmin_{f \in \mathcal F^\epsilon} \mathcal R_n(f)$. Then, with probability at least $1 - \delta$,\looseness=-1
    \begin{equation*}
        \mathcal R_{\mathcal E}(f_n) \leq \mathcal R_{\mathcal E}(\mathcal F) + \Delta^2\sqrt{\frac{H_\epsilon(\mathcal F) + \log\frac{2}{\delta}}{2n}} + \epsilon\Delta + \frac{\epsilon^2}{2}\text,
    \end{equation*}
    where we have defined $\Delta \doteq \bar v - \underaccent{\bar}{v}$.
}
\begin{proof}\vspace{-0.5em}
    In the language of statistical learning, the return error $\mathcal R \doteq \mathcal R_{\mathcal E}$ is a measure of \emph{risk}, and the empirical return error $\mathcal R_n$ is the corresponding \emph{empirical risk}.
    The argument presented here is standard \citep[cf.][Section 3.5]{bousquet2003introduction}.
    Let $f_\epsilon \doteq \argmin_{f \in \mathcal F^\epsilon} \mathcal R(f)$.
    We have,
    \begin{align}
        \mathcal R(f_n) - \mathcal R(\mathcal F^\epsilon) &= \mathcal R(f_n) - \mathcal R_n(f_n) + \mathcal R_n(f_n) - \mathcal R(f_\epsilon)\nonumber\\
        &\leq \mathcal R(f_n) - \mathcal R_n(f_n) + \mathcal R_n(f_\epsilon) - \mathcal R(f_\epsilon)\nonumber\\
        &\leq 2 \max_{f \in \mathcal F^\epsilon}|\mathcal R(f) - \mathcal R_n(f)|.\label{eq:risk}
    \end{align}
    Hoeffding's inequality states that if $\{\xi_i\}_{i=1}^n$ are $n$ independent random variables with mean $\mu$ that are contained almost surely in the interval $[a, b]$, then
    \begin{equation*}
        \mathbb P\biggl\{\biggl|\frac{1}{n}\sum_{i=1}^{n}\xi_i - \mu\biggr| \geq \eta\biggr\} \leq 2\exp\biggl(-\frac{2n\eta^2}{(b - a)^2}\biggr)
    \end{equation*}
    for any $\eta > 0$.
    We can apply this inequality to our setting by defining $\xi_i(f) \doteq \frac{1}{2}\bigl\{f(y_0, y_{-1}, \dots) - \sum_{t=0}^\infty \gamma^t r(y_{t+1})\bigr\}^2$ for $f \in \mathcal F^\epsilon$.
    Then, $\xi_i(f) \in [0, \Delta^2/2]$ almost surely for any $f$.
    These variables are independent since the dataset is i.i.d., so Hoeffding's inequality applies.
    In particular, we see that
    \begin{equation*}
        \mathcal R_n(f) = \frac{1}{n}\sum_{i = 1}^n \xi_i(f) \quad\text{and}\quad \mathcal R(f) = \mathbb E\bigl[\mathcal R_n(f)\bigr] = \mu(f).
    \end{equation*}
    Thus, for $\eta > 0$, we have
    \begin{multline*}
        \quad\mathbb P\bigl\{2\max_{f \in \mathcal F^\epsilon}|\mathcal R(f) - \mathcal R_n(f)| \geq \eta\bigr\}\\
        \begin{aligned}
            &= \mathbb P \bigcup_{f \in \mathcal F^\epsilon} \{|\mathcal R(f) - \mathcal R_n(f)| \geq \eta/2\}\quad\\
            &\leq \sum_{f \in \mathcal F^\epsilon} \mathbb P \{|\mathcal R(f) - \mathcal R_n(f)| \geq \eta/2\}\\
            &\leq 2 |\mathcal F^\epsilon| \exp\biggl(-\frac{2n\eta^2}{\Delta^4}\biggr) \doteq \delta.
        \end{aligned}
    \end{multline*}
    Solving for $\eta$, we get that with probability at least $1 - \delta$,
    \begin{equation*}
        2\max_{f \in \mathcal F^\epsilon}|\mathcal R(f) - \mathcal R_n(f)| \leq \eta \doteq \Delta^2\sqrt{\frac{\log |\mathcal F^\epsilon| + \log\frac{2}{\delta}}{2n}}.
    \end{equation*}
    The final result follows from \cref{eq:risk}, \cref{lem:ve-norm} (where we let $\mathcal G \doteq \mathcal F^\epsilon$), and the definition of metric entropy.
\end{proof} 

\paragraph{\cref{lem:ve-norm}.}\phantomsection\label{proof:lem:ve-norm}
\emph{\hspace{-1em}
    Let $\mathcal E$ be an environment, $\mathcal F$ a function class, and $\epsilon > 0$.
If $\mathcal G$ is an $\epsilon$-cover of $\mathcal F$, then
    \begin{equation*}
    \mathcal R_{\mathcal E}(\mathcal G) \leq \mathcal R_{\mathcal E}(\mathcal F) + \epsilon\Delta + \frac{\epsilon^2}{2}.
\end{equation*}
}
\begin{proof}\vspace{-0.5em}
    Define $f \in \mathcal F$ as $f \doteq \argmin_{f' \in \mathcal F} \mathcal R_{\mathcal E}(f')$ and let $g \in \mathcal G$ be such that $\norm{f - g}_\infty \leq \epsilon$.
    Then,
    \begin{align*}
        2\mathcal R_{\mathcal E}(\mathcal G) &\leq 2\mathcal R_{\mathcal E}(g)\\[-1.5em]
        &= \mathbb E_{\mathcal E}\biggl[\bigl\{g(h_0) - \overbrace{\sum_{t=0}^\infty \gamma^t r(y_{t+1})}^{R_0}\bigr\}^2\biggr]\\
        &= \mathbb E_{\mathcal E}\bigl[\{g(h_0) - f(h_0) + f(h_0) - R_0\}^2\bigr]\\
        &\leq \mathbb E_{\mathcal E}\bigl[\{\epsilon + |f(h_0) - R_0|\}^2\bigr]\\
        &= \mathbb E_{\mathcal E}\bigl[\epsilon^2 + 2\epsilon|f(h_0) - R_0| + |f(h_0) - R_0|^2\bigr]\\
        &\leq 2\mathcal R_{\mathcal E}(\mathcal F) + 2\epsilon\Delta + \epsilon^2.\tag*{\qed}
    \end{align*}\let\qed\relax\vspace{-0.5em}
\end{proof}

\paragraph{\cref{lem:H-window}.}\phantomsection\label{proof:lem:H-window}
\emph{\hspace{-1em}
    Let $m \in \mathbb N_0$ be a window length. Then, the metric entropy of $\mathcal F_{m}$ is, for all $\epsilon > 0$,
    \begin{equation*}
        H_\epsilon(\mathcal F_m) = |\mathcal Y|^{m}\log\bigg\lceil\frac{\Delta}{2\epsilon}\bigg\rceil\text.
    \end{equation*}
    Thus, as a function of $m$, $H_\epsilon(\mathcal F_m) \in \Theta(|\mathcal Y|^{m})$.
}
\begin{proof}\vspace{-0.5em}
    An \emph{$\epsilon$-packing} of a set $\mathcal F$ is a finite subset $\mathcal G^\epsilon \subset \mathcal F$ with the property that, for any $f, g \in \mathcal G^\epsilon$, it holds that $\norm{f - g}_\infty \geq \epsilon$.
    The \emph{$\epsilon$-packing number} $M_\epsilon(\mathcal F)$ of $\mathcal F$ is the cardinality of the largest $\epsilon$-packing of $\mathcal F$.
    This number is closely related to the covering number $N_\epsilon(\mathcal F)$ defined earlier.
    Let $\epsilon > 0$ and let $\mathcal F^\epsilon$ and $\mathcal G^{2\epsilon}$ be an $\epsilon$-cover and a $2\epsilon$-packing of the set $\mathcal F$, respectively.
    Then, it holds that \citep{tao-2014-metric}
    \begin{equation}\label{eq:covering}
        |\mathcal G^{2\epsilon}| \leq M_{2\epsilon}(\mathcal F) \leq N_\epsilon(\mathcal F) \leq |\mathcal F^\epsilon|.
    \end{equation}
    We will prove the result by finding a set $\mathcal F^\epsilon_m$, for each $m \in \mathbb N_0$ and each $\epsilon > 0$, that is simultaneously an $\epsilon$-cover and a $2\epsilon$-packing of $\mathcal F_m$.
    By the inequality above, this implies that $H_\epsilon(\mathcal F_m) = \log|\mathcal F^\epsilon_m|$.
    The set we consider is
    \begin{equation*}
        \mathcal F_m^\epsilon \doteq \{f \circ \operatorname{win}_m \mid f: \mathcal Y^{m} \to \mathcal V_\epsilon\} \subset \mathcal F_m\text,
    \end{equation*}
    where $\mathcal V_\epsilon$ is a set of $\lceil\Delta/(2\epsilon)\rceil$ points in $[\underaccent{\bar}{v}, \bar v]$ that both $\epsilon$-covers this interval and $2\epsilon$-packs it.
    The set $\mathcal V_\epsilon$ exists because we can fit $\lceil\Delta/(2\epsilon)\rceil$ uniformly spaced points with equal distance $2\epsilon$ into the interval, since this only requires a length of $\lceil\Delta/(2\epsilon) - 1\rceil(2\epsilon) < \Delta$.
    Taking these points as the centers of $\epsilon$-balls, we see that the total volume covered by these balls is $\lceil\Delta/(2\epsilon)\rceil (2\epsilon) \geq \Delta$, since there is no overlap.
    Thus, if placed at an appropriate position in $[\underaccent{\bar}{v}, \bar v]$, these points both cover an pack the interval as desired.
    
    Now consider the set $\mathcal F_m^\epsilon$.
    Let $f, g \in \mathcal F_m^\epsilon$ be two distinct functions.
    Then, there must be some history $h$ such that $f(h) \neq g(h)$.
    As $f(h), g(h) \in \mathcal V_\epsilon$, this implies that $|f(h) - g(h)| \geq 2\epsilon$.
    In other words, $\norm{f - g}_\infty \geq 2\epsilon$, and so $\mathcal F_m^\epsilon$ is a $2\epsilon$-packing of $\mathcal F_m$.
    Now let $f \in \mathcal F_m$.
    Then, for every history $h$, there exists a value $\tilde v \in \mathcal V_\epsilon$ such that $|\tilde v - f(h)| \leq \epsilon$ (since $\mathcal V_\epsilon$ covers $[\underaccent{\bar}{v}, \bar v]$).
    Define $g \in \mathcal F_m^\epsilon$ such that $g(h)$ is exactly this $\tilde v \in \mathcal V_\epsilon$ for each history $h$.
    Then, $\norm{f - g}_\infty \leq \epsilon$, and so $\mathcal F_m^\epsilon$ is an $\epsilon$-cover of $\mathcal F_m$.
    Thus, we have
    \begin{equation*}
        H_\epsilon(\mathcal F_m) = \log|\mathcal F_m^\epsilon| = \log |\mathcal V_\epsilon|^{|{\mathcal Y}^m|} = |\mathcal Y|^{m}\log\lceil\Delta/(2\epsilon)\rceil\text.\qedhere
    \end{equation*}\vspace{-0.5em}
\end{proof}

\paragraph{\cref{lem:dY}.}\phantomsection\label{proof:lem:dY}
\emph{\hspace{-1em}
Let $|\mathcal Y| > 1$. If $\lambda < \frac{1}{2}$, then $d_\lambda < |\mathcal Y| - 1$.
}
\begin{proof}\vspace{-0.5em}
    As $|\mathcal Y| \geq 2$, we have
    \begin{alignat*}{3}
        &&\biggl(1 + \frac{1}{|\mathcal Y|}\biggr)^{|\mathcal Y|} &\;<\; \mathrm{e} \;<\; |\mathcal Y| + 1\\
        \mathllap{\implies\quad}&& \biggl(\frac{|\mathcal Y|}{|\mathcal Y| + 1}\biggr)^{|\mathcal Y|} &\;>\; (|\mathcal Y| + 1)^{-1}\\
        \mathllap{\iff\quad}&& |\mathcal Y| &\;>\; (|\mathcal Y| + 1)^{\frac{|\mathcal Y| - 1}{|\mathcal Y|}}\\
        \mathllap{\iff\quad}&& |\mathcal Y|^{\frac{-1}{|\mathcal Y| - 1}} &\;<\; (|\mathcal Y| + 1)^{\frac{-1}{|\mathcal Y|}}.
    \end{alignat*}
    Thus, $|\mathcal Y|^{\frac{-1}{|\mathcal Y| - 1}}$ is increasing in $|\mathcal Y|$, and therefore, as $|\mathcal Y| \geq 2$, the minimum is achieved when $|\mathcal Y| = 2$, which yields $2^{\frac{-1}{2 - 1}} = \frac{1}{2}$.
    From this, it follows that
    \begin{alignat*}{3}
        &&\lambda &\;<\; \frac{1}{2}\\
        \mathllap{\implies\quad}&&\lambda &\;<\; |\mathcal Y|^{\frac{-1}{|\mathcal Y| - 1}}\\
        \mathllap{\iff\quad}&&\log(1 / \lambda) &\;>\; \frac{\log |\mathcal Y|}{|\mathcal Y| - 1}\\
        \mathllap{\iff\quad}&&\frac{\log |\mathcal Y|}{\log(1 / \lambda)}&\;<\; |\mathcal Y| - 1\\
        \mathllap{\iff\quad}&&d_\lambda &\;>\; |\mathcal Y| - 1.\tag*{\qedhere}
    \end{alignat*}
\end{proof}

\paragraph{\cref{thm:tmaze} {\mdseries(T-maze)}.}\phantomsection\label{proof:thm:tmaze}
\emph{\hspace{-1em}
    There exists a sequence $(\mathcal E_k)$ of environments (with constant observation space $\mathcal Y$) with the property that, for every $\epsilon > 0$ and every $k \in \mathbb N$,
    \begin{alignat*}{3}
        \min_{m}\,&\{H_\epsilon(\mathcal F_m) &\;\mid\;&& \mathcal R_{\mathcal E_k}(\mathcal F_m) = 0\} \in\;& \Omega(|\mathcal Y|^k)\text{, and}\\
        \min_{\lambda, L}\, &\{H_\epsilon(\mathcal F_{\lambda, L}) &\;\mid\;&& \mathcal R_{\mathcal E_k}(\mathcal F_{\lambda, L}) = 0\} \in\;& \mathcal O(k^{|\mathcal Y| - 1}).
    \end{alignat*}
}
\begin{proof}\vspace{-0.5em}
    The \emph{T-maze} environment $\mathcal E_k$ with corridor length $k$ is defined as follows (see \cref{fig:tmaze}).
    There are $|\mathcal Y| = 5$ possible observations, $\mathcal Y = \{\mathtt{a}, \mathtt{b}, \mathtt{o}, \mathtt{x}, \mathtt{y}\}$, each encoded as a one-hot vector in $\mathbb R^5$.
    An episode starts at the left of the corridor, where the agent receives either observation $y_0 = \mathtt{a}$ or $y_0 = \mathtt{b}$, each with equal probability.
    Moving along the corridor to the right, the agent receives the observation $\mathtt{o}$ at every step until, at time $t = k - 1$, it reaches the end of the corridor, where the observation $y_{k - 1}$ is, with equal probability, either $\mathtt{x}$ or $\mathtt{y}$.
    At this point, the agent has to decide whether to go up (\uup) or down (\udn), after which the episode ends.
    The reward it receives in the final step is determined by the following table.
    \begin{table}[H]
        \centering
        \caption{Rewards in the T-maze}
        \vspace{0.5em}
        \begin{tabular}{rrcc}
            \toprule
            & & $y_0 = \mathtt{a}$ & $y_0 = \mathtt{b}$ \\
            \midrule
            \multirow{2}{*}{$y_{k - 1} = \mathtt{x}$} & $u_{k - 1} = \uup$ & $+1$ & $-1$\\
            & $u_{k - 1} = \udn$ & $-1$ & $+1$\\
            \multirow{2}{*}{$y_{k - 1} = \mathtt{y}$} & $u_{k - 1} = \uup$ & $-1$ & $+1$\\
            & $u_{k - 1} = \udn$ & $+1$ & $-1$\\
            \bottomrule
        \end{tabular}
    \end{table}
    All other transitions give a reward of $0$.
    We consider the fixed policy that always goes up (\uup) at the end of the corridor.
    Our goal is to estimate the value function of this policy.
    The true value function is given by the following table (where histories are written left-to-right and $i \in \{0, \dots, k - 2\}$).
    \begin{table}[H]
        \centering
        \caption{T-maze value function for `always-up' policy}\label{tab:tmaze-v}
        \vspace{0.5em}
        \begin{tabular}{rc}
            \toprule
            History $h$ & $v(h)$ \\
            \midrule
            $\mathtt{ao}^i$ & $0$\\
            $\mathtt{bo}^i$ & $0$\\
            $\mathtt{ao}^{k-2}\mathtt{x}$ & $+1$\\
            $\mathtt{bo}^{k-2}\mathtt{x}$ & $-1$\\
            $\mathtt{ao}^{k-2}\mathtt{y}$ & $-1$\\
            $\mathtt{bo}^{k-2}\mathtt{y}$ & $+1$\\
            other & $\bot$\\
            \bottomrule
        \end{tabular}
    \end{table}
    Before observing $y_{k - 1}$, the value must be $0$, since both $\mathtt{x}$ and $\mathtt{y}$ are equally likely, and $u_{k - 1} = \uup$ will thus result in a reward of $+1$ or $-1$ with equal probability.
    All histories other than the ones in the table above can have an arbitrary value (denoted $v(h) = \bot$ above), since the probability of these histories is $0$.
    To represent this value function accurately with a length-$m$ window (where we consider the $0$-padded histories), it is necessary that $m \geq k$, since it would otherwise be impossible to differentiate between $\mathtt{ao}^{k-2}\mathtt{x}$ and $\mathtt{bo}^{k-2}\mathtt{x}$.
    This already gives us the first result.
    Using \cref{lem:H-window} (with $\Delta = 2$), $m \geq k$ implies that, for all $\epsilon > 0$,
    \begin{equation*}
        H_\epsilon(\mathcal F_m) \geq |\mathcal Y|^{k}\log\lceil1/\epsilon\rceil \in \Omega(|\mathcal Y|^k)\text.
    \end{equation*}

    We now construct a function in $\mathcal F_{\lambda, L}$, where $\lambda = \frac{k - 1}{k}$ and $L = \sqrt{2}\mathrm{e}k$ that achieves a return error of $0$.
    This implies, by \cref{lem:H-trace}, that for all $\epsilon > 0$,
    \begin{equation*}
        H_\epsilon(\mathcal F_{\lambda, L}) \leq \log\left\lceil\frac{2}{\epsilon}\right\rceil\biggl\lceil\frac{2\sqrt{2}\mathrm{e}k\sqrt{|\mathcal Y| - 1}}{\epsilon}\biggr\rceil^{\mathrlap{|\mathcal Y| - 1}}\hspace{0.5em} \in \mathcal O(k^{|\mathcal Y|-1}).
    \end{equation*}
    We define $f \doteq \bar f \circ z_\lambda$ by setting $\hat f(z_\lambda(h)) \doteq v(h)$ for each ($0$-padded) history $h$ in \cref{tab:tmaze-v} and extending $\hat f$ to a $\operatorname{Lip}(\hat f)$-Lipschitz continuous function $\bar f: [0, 1]^{|\mathcal Y|} \to [-1, 1]$ using Kirszbraun's theorem.
    This ensures that $f$ accurately represents the value function.
    To complete the proof, we need to verify that $\operatorname{Lip}(\hat f) \leq L$.
    We thus check all pairs of histories in \cref{tab:tmaze-v} whose values are not equal.
    Checking the first pair yields
    \begin{multline*}
        \norm{z_\lambda(\mathtt{ao}^{k-2}\mathtt{x}) - z_\lambda(\mathtt{bo}^{k-2}\mathtt{x})}\\
        = \sqrt{2}(1 - \lambda)\lambda^{k - 1}
        = \frac{\sqrt{2}}{k\bigl(1 + \frac{1}{k - 1}\bigr)^{k - 1}} \geq \frac{\sqrt{2}}{\mathrm{e}k}.
    \end{multline*}
    This is equal to $\norm{z_\lambda(\mathtt{ao}^{k-2}\mathtt{y}) - z_\lambda(\mathtt{bo}^{k-2}\mathtt{y})}$ by symmetry.
    Checking the next pair yields
    \begin{multline*}
        \norm{z_\lambda(\mathtt{ao}^{k-2}\mathtt{x}) - z_\lambda(\mathtt{ao}^{k-2}\mathtt{y})}\\
        = (1 - \lambda)\norm{\mathtt{x} - \mathtt{y}} = \sqrt{2}(1 - \lambda) = \sqrt{2}/k\text,
    \end{multline*}
    which is equal to $\norm{z_\lambda(\mathtt{bo}^{k-2}\mathtt{x}) - z_\lambda(\mathtt{bo}^{k-2}\mathtt{y})}$ by symmetry.
    Checking the next pair yields
    \begin{multline*}
        \norm{z_\lambda(\mathtt{ao}^{k-2}\mathtt{x}) - z_\lambda(\mathtt{ao}^{i})}\\
        \begin{aligned}
            &= \norm{(1 - \lambda) \mathtt{x} + \underbrace{\lambda z_\lambda(\mathtt{ao}^{k-2}) - z_\lambda(\mathtt{ao}^{i})}_{\bot (1 - \lambda)\mathtt{x}}}\\[-1em]
            &\geq (1 - \lambda) = 1/k\text,
        \end{aligned}
    \end{multline*}
    where we have used the Pythagorean theorem.
    This is equal to $\norm{z_\lambda(\mathtt{ao}^{k-2}\mathtt{y}) - z_\lambda(\mathtt{ao}^{i})}$, $\norm{z_\lambda(\mathtt{bo}^{k-2}\mathtt{x}) - z_\lambda(\mathtt{bo}^{i})}$, and $\norm{z_\lambda(\mathtt{bo}^{k-2}\mathtt{y}) - z_\lambda(\mathtt{bo}^{i})}$ by symmetry.
    Similarly,
    \begin{multline*}
        \norm{z_\lambda(\mathtt{ao}^{k-2}\mathtt{x}) - z_\lambda(\mathtt{bo}^{i})}\\
        \begin{aligned}
            &= \norm{(1 - \lambda) \mathtt{x} + \underbrace{\lambda z_\lambda(\mathtt{ao}^{k-2}) - z_\lambda(\mathtt{bo}^{i})}_{\bot (1 - \lambda)\mathtt{x}}}\\[-1em]
            &\geq (1 - \lambda) = 1/k.
        \end{aligned}
    \end{multline*}
    This is equal to $\norm{z_\lambda(\mathtt{ao}^{k-2}\mathtt{y}) - z_\lambda(\mathtt{bo}^{i})}$, $\norm{z_\lambda(\mathtt{bo}^{k-2}\mathtt{x}) - z_\lambda(\mathtt{ao}^{i})}$, and $\norm{z_\lambda(\mathtt{bo}^{k-2}\mathtt{y}) - z_\lambda(\mathtt{ao}^{i})}$ by symmetry.
    Finally,
    \begin{multline*}
        \norm{z_\lambda(\mathtt{ao}^{k-2}\mathtt{x}) - z_\lambda(\mathtt{bo}^{k-2}\mathtt{y})}\\
        \begin{aligned}
            &= \norm{(1 - \lambda) (\mathtt{x} - \mathtt{y}) + \underbrace{\lambda\{z_\lambda(\mathtt{ao}^{k-2}) - z_\lambda(\mathtt{bo}^{k-2})\}}_{\bot (1 - \lambda) (\mathtt{x} - \mathtt{y})}}\\[-1em]
            &\geq \sqrt{2}(1 - \lambda) = \sqrt{2}/k\,
        \end{aligned}
    \end{multline*}
    which is equal to $\norm{z_\lambda(\mathtt{ao}^{k-2}\mathtt{y}) - z_\lambda(\mathtt{bo}^{k-2}\mathtt{x})}$ by symmetry.
    The Lipschitz constant of $\hat f$ now satisfies $\operatorname{Lip}(\hat f) \leq 2/d$, where $d = \sqrt{2}/(\mathrm{e}k)$ is the smallest of the distances above.
    Thus, we have $\operatorname{Lip}(\hat f) \leq L$ as desired.\vspace{-0.5em}
\end{proof}

\section{Implementation details}\label{app:hyper}
Our implementations of both TD learning and PPO, as well as the two environments (Sutton's noisy random walk and the T-maze) are available online at \url{https://github.com/onnoeberhard/memory-traces}.

Our PPO implementation is based on \citet{shengyi2022the37implementation}, and the Minigrid T-maze environment implementation uses the \verb|xminigrid| library \cite{nikulin2024xlandminigrid}.
The hyperparameters we used in our PPO experiments are compiled in \cref{tab:ppo}.
We performed a hyperparameter search over the discount factor $\gamma \in \{0.9, 0.99, 0.999\}$ and plot the results using the best performing values in \cref{fig:ppo}.
Both actor and critic have a neural network architecture of two hidden layers with $64$ neurons each and $\tanh$ activation functions.
We used the Adam optimizer with a linear learning rate decay from $0.0003$ to $0$ as suggested by \citet{shengyi2022the37implementation}.

Our TD experiments ran for $100,000$ steps and we performed a hyperparameter search for the step size $\alpha$ over a range of 13 logarithmically spaced values between $0.0001$ and $1.0$.
In \cref{fig:sutton}, we show the results using the best step size for each individual value of the window length $m$, but keep the step size constant at $\alpha = 0.02$ for all values of $\lambda$ for the memory trace.
This is the reason why it looks like memory traces are slightly worse with $\lambda = 0$ than windows with $m = 1$; this discrepancy would disappear if we also optimized the step sizes in the memory trace experiments.

\begin{table}[h]
    \centering
    \caption{PPO hyperparameters}\label{tab:ppo}
    \vspace{0.5em}
    \begin{tabular}{lr}
    \toprule
    Parameter & Value\\ 
    \midrule 
    Total number of steps & $1,024,000,000$\\
    Number of parallel environments & $16$\\
    Number of steps per update & $128 \times 16$\\
    Learning rate & $0.0003 \rightarrow 0$\\
    Generalized advantage estimation $\lambda$ & $0.95$\\
    Number of epochs & $2$\\
    Number of minibatches & $8$\\
    Clipping parameter $\epsilon$ & $0.2$\\
    Value loss weight & $0.5$\\
    Entropy coefficient & $0.01$\\
    Maximum gradient norm & $0.5$\\
    \bottomrule
    \end{tabular}
\end{table}


\end{document}